%% file: samplepaper.tex
\documentclass[runningheads]{llncs}
\usepackage[T1]{fontenc}
\usepackage{algorithm}
\usepackage[noend]{algpseudocode}
\usepackage{wrapfig}
\usepackage{hyperref}
\usepackage{graphicx}
\usepackage{lipsum}
\usepackage{tcolorbox}
\tcbuselibrary{breakable}
\usepackage{wrapfig}
\usepackage{amsmath}
\usepackage{amssymb}
\usepackage{subcaption}
\usepackage{tabularx,array}
\usetikzlibrary{plotmarks,decorations.shapes,shapes.geometric,shadows}
\usepackage{color}

\usepackage{import}
\usepackage{xifthen}
\usepackage{pdfpages}
\usepackage{transparent}

\usepackage{orcidlink}
\renewcommand{\orcidID}{\orcidlink}
\allowdisplaybreaks
\definecolor{grayfilling}{gray}{0.95} % Used for tikz graphics as a filling
\definecolor{grayshadow}{gray}{0.5} % Used for tikz graphics as a filling
\definecolor{mydarkblue}{HTML}{003a7d}
\definecolor{mypurple}{HTML}{c701ff}
\definecolor{mymidblue}{HTML}{008dff}
\colorlet{myblue}{mymidblue}
\definecolor{mygreen}{HTML}{4ecb8d}
\definecolor{myorange}{HTML}{ff9d3a}
\definecolor{myyellow}{HTML}{f9e858}
\definecolor{myred}{HTML}{d83034}

\colorlet{accent}{mypurple}

\makeatletter
\renewcommand\paragraph{\@startsection{paragraph}{4}{\z@}%
        {-8\p@ \@plus -4\p@ \@minus -4\p@}%
        {-0.5em \@plus -0.22em \@minus -0.1em}%
        {\normalfont\bfseries}}%
\makeatletter

\usepackage{mdframed}
\newmdenv[
  backgroundcolor=gray!10,
  shadow=false,
  hidealllines=true,
  leftline=true,
  linecolor=gray,
  linewidth=3pt,
  skipabove=4pt,
  skipbelow=4pt,
  innerleftmargin=10pt,
  innerrightmargin=10pt,
  innertopmargin=8pt,
  innerbottommargin=8pt
]{graybox}
\newmdenv[
  backgroundcolor=accent!10,
  shadow=false,
  hidealllines=true,
  leftline=true,
  linecolor=accent,
  linewidth=3pt,
  skipabove=4pt,
  skipbelow=4pt,
  innerleftmargin=10pt,
  innerrightmargin=10pt,
  innertopmargin=8pt,
  innerbottommargin=8pt
]{accentbox}
\newtheorem{thm_}{Theorem}
\renewenvironment{theorem}
{\begin{accentbox}\begin{thm_}}
{\end{thm_}\end{accentbox}}
\newtheorem{cor_}{Corollary}
\renewenvironment{corollary}
{\begin{accentbox}\begin{cor_}}
{\end{cor_}\end{accentbox}}
\newtheorem{lem_}{Lemma}
\renewenvironment{lemma}
{\begin{accentbox}\begin{lem_}}
{\end{lem_}\end{accentbox}}
\spnewtheorem*{runexample}{Running Example}{\bfseries}{\itshape}

\DeclareMathOperator{\best}{best}
\newcommand{\cdotx}{\,\cdot\,}
\newcommand{\deltat}{{\delta_T}} % Temporal disturbance
\newcommand{\deltati}[1]{{\delta_{T,#1}}} % Temporal disturbance
\newcommand{\deltax}{{\delta_X}} % Spatial disturbance
\newcommand{\Deltat}{{\Delta_T}} % Temporal robustness
\newcommand{\Deltax}{{\Delta_X}} % Spatial robustness
\newcommand{\F}{\mathbf{F}} % Eventually operator
\newcommand{\G}{\mathbf{G}} % Globally/always operator
\newcommand{\norm}[1]{\|#1\|}
\newcommand{\N}{\mathbb{N}} % Natural numbers
\newcommand{\R}{\mathbb{R}} % Reel numbers
\DeclareMathOperator{\sd}{{sd}} % Signed distance function
\newcommand{\U}{\mathbf{U}} % Until operator
\newcommand{\Z}{\mathbb{Z}} % Integers

\begin{document}
\title{Spatiotemporal Robustness of Temporal Logic Tasks using Multi-Objective Reasoning}
\titlerunning{Spatiotemporal Robustness using Multi-Objective Reasoning}
% If the paper title is too long for the running head, you can set
% an abbreviated paper title here
%

\newif\ifanonymous
\anonymousfalse

\ifanonymous
\author{Anonymous Author(s)}
\institute{Anonymous Institution(s)}
\else
\author{Oliver Schön\orcidID{0000-0002-0214-6455}\thanks{Corresponding author} \and
Lars Lindemann\orcidID{0000-0003-3430-6625}}
\authorrunning{Oliver Schön and Lars Lindemann}
% First names are abbreviated in the running head.
% If there are more than two authors, 'et al.' is used.
%
\institute{Automatic Control Laboratory, ETH Zürich, 
% Zürich 8092, 
Switzerland \\
\email{\{oschoen,llindemann\}@ethz.ch}}
\fi
\maketitle              % typeset the header of the contribution
\begin{abstract}
The reliability of autonomous systems depends on their robustness, i.e., their ability to meet their objectives under uncertainty. In this paper, we study spatiotemporal robustness of temporal logic specifications evaluated over discrete-time signals.  Existing work has proposed robust semantics that capture not only Boolean satisfiability, but also the geometric distance from unsatisfiability, corresponding to admissible spatial perturbations of a given signal.
In contrast, we  propose spatiotemporal robustness (STR), which captures admissible spatial and temporal perturbations jointly. This notion is particularly informative for interacting systems, such as  multi-agent robotics, smart cities, and air traffic control. We define STR as a multi-objective reasoning problem, formalized via a partial order over spatial and temporal perturbations. This perspective has two key advantages: (1) STR can be interpreted as a Pareto-optimal set that characterizes all admissible spatiotemporal perturbations, and (2) STR can be computed using tools from multi-objective optimization. To navigate computational challenges, we propose robust semantics  for STR that are sound in the sense of suitably under-approximating  STR while being computationally tractable. Finally, we present monitoring algorithms for STR using these robust semantics. To the best of our knowledge, this is the first work to deal with robustness across multiple dimensions via  multi-objective reasoning. 

\keywords{ Robustness and Uncertainty \and Temporal Logic  \and Monitoring.}
\end{abstract}
\section{Introduction}

Consider the following three safety-critical applications: (1) drone fleets for wildfire monitoring, (2) autonomous taxis navigating urban traffic, and (3) air traffic ground control for avoiding runway incursions. While these applications are fundamentally different in terms of their underlying systems and objectives, they share some key characteristics when it comes to reasoning about their robustness and reliability. First, these applications are time-critical in the sense that  systems have to meet time-sensitive objectives.  Second, failure of the systems to achieve their objectives can have detrimental effects on the system’s safety and performance. Third, the systems are subject to uncertainties that can lead to spatial and temporal deviations from their intended behavior. These concerns are not merely theoretical, but can result in real accidents when robustness margins are violated, e.g., as in the 2025 Potomac River mid-air collision~\cite{Funk}. Such safety issues ultimately call for robust  monitoring, verification, and design techniques. 

These observations motivate us to study  the \emph{spatiotemporal robustness} (STR) w.r.t. temporal logic specifications. To set the stage, assume that we are given a signal $x(t)$ for times $t\ge 0$ (later assumed to be discrete-time), e.g., describing the position and velocity of an aircraft. Suppose also that the signal $x(t)$ satisfies a requirement $\phi$, denoted by $x\models \phi$.  In time-critical systems, such requirements are expressed as spatiotemporal objectives that specify both the `when' (temporal requirement) and the `what' (spatial requirement). While our definition of STR applies to any such requirement, we adopt temporal logic --- in particular signal temporal logic \cite{maler2004monitoring} --- to formally express spatiotemporal requirements. Temporal logic is widely used in robotics \cite{kress2009temporal,kantaros2020stylus,leung2023backpropagation}, control \cite{raman2014model,lindemann2018control,belta2017formal}, and autonomous systems \cite{bartocci2018specification,annpureddy2011s} due to its generality and interpretability in natural language \cite{chen2023nl2tl,mendoza2024translating}.

\begin{figure}[t]
    \centering
    \begin{subfigure}[b]{.49\linewidth}
        \centering
    \def\svgwidth{.8\linewidth}
    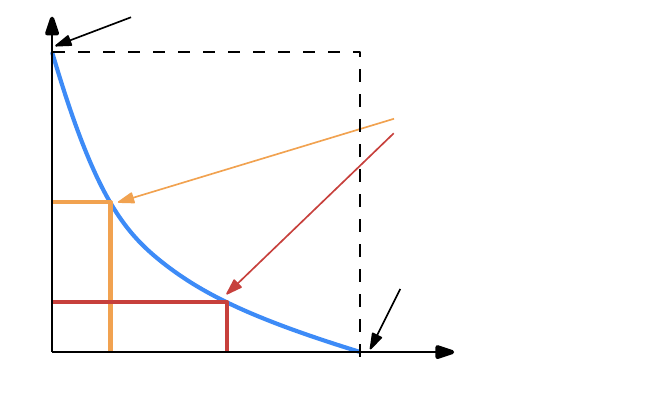

        \caption{Spatial and temporal robustness trade-offs captured by STR.}
        %alt={A graph with temporal robustness on the x-axis and spatial robustness on the y-axis. The Pareto front is shown as a blue curve, which is monotonically decreasing and symmetric around the line x=y.}
        \label{fig:Pareto}
    \end{subfigure}
    \begin{subfigure}[b]{.49\linewidth}
        \centering
        \begin{tikzpicture}[scale=0.6]
    \tikzset{every mark/.append style={scale=1.4, mark repeat=2, mark phase=1}}
        % Axes
        \draw[-latex] (0,0) -- (8,0) node[right] {$t$};
        \draw[-latex] (0,-0.2) -- (0,3.5) node[above] {$x(t)$};
    
        % Parameters
        \def\tone{2}
        \def\ttwo{5+.4}
        \def\A{2}
    
        % Time window
        \fill[mygreen!15] (\tone,\A) rectangle (\ttwo,3.9);
        \node at (3.5+.2,3.3+.1) {\small $[t_1,t_2]$};
    
        % Threshold
        \draw[dashed] (0,\A) -- (8,\A);
        \node[left] at (0,\A) {$A$};
    
        % Nominal signal
        \draw[thick, myblue, dashed, mark=*]
            plot[smooth, domain=0:8]
            (\x,{2.6*exp(-(\x-3.5)^2/1.2)+.25*sin(150*\x)});
        %\node[blue] at (6.2,2.6) {\small nominal};
    
        % Spatially perturbed signal (vertical shift)
        \draw[thick, myred, dashed, mark=triangle*, mark size=3]
            plot[smooth, domain=0:8]
            (\x,{2.6*exp(-(\x-3.5)^2/1.2)+.25*sin(150*\x)-.6});
        %\node[red] at (6.2,2.0) {\small spatial};
    
        % Temporally perturbed signal (horizontal shift)
        \draw[thick, myorange, dashed, mark=square*]
            plot[smooth, domain=0:8]
            (\x,{2.6*exp(-(\x-3.5-1.9)^2/1.2)+.25*sin(150*(\x-1.9))});
        %\node[orange] at (6.3,1.3) {\small temporal};
    \end{tikzpicture}
    \caption{STR of different signals $x$ for the task ``\emph{eventually $x(t)\geq A$ within $t\in[t_1,t_2]$}''.}
    %alt={Three signals are plotted over time, with a green shaded area indicating the time window [t1,t2] and a dashed line indicating the threshold A. The blue signal is the nominal signal, which exceeds the threshold within the time window. The red signal is a spatially perturbed version of the nominal signal, which is shifted downwards and barely exceeds the threshold within the time window. The orange signal is a temporally perturbed version of the nominal signal, which is shifted to the right and also barely meets the threshold within the time window.}
    \label{fig:motivating_example}
    \end{subfigure}
    \caption{Motivation for spatiotemporal robustness.}
    \label{fig:motivation}
\end{figure}

\emph{Spatial robustness} was proposed in \cite{fainekos2009robustness,donze2010robust} to measure  the maximal size $\Deltax\geq\norm{\deltax}_\infty$ of a  perturbation $\delta_x(t)$ that can be added to $x(t)$ without violating $\phi$, i.e., such that $x+\delta_x \models \phi$. Since then, spatial robustness metrics  have been transformative and are  widely used for analyzing safety (e.g., for collision avoidance \cite{ames2016control,ames2019control}) and designing robust systems \cite{deshmukh2017robust,gilpin2020smooth,li2017reinforcement,pant2017smooth}. 
Despite their widespread use, they have major shortcomings when it comes to capturing temporal --- and ultimately  spatiotemporal --- perturbations. Indeed, a challenge in many applications arises from variations in computation and actuation times and from complex timing uncertainties caused by human interaction, sensing delays and failures, and computation-intensive perception. \emph{Temporal robustness} metrics \cite{donze2010robust,lindemann2022temporal} capture the maximum size $\Deltat\geq |\deltat|$ of a time shift $\deltat$ that can be applied to $x(t)$ without violating $\phi$, i.e., such that $x'\models \phi$, where $x'(t)=x(t-\deltat)$ is a time shifted version of $x(t)$.\footnote{More generally, the time shift $\deltat$ could be a vector to model asynchronous time shifts within the signal $x$, as we choose to do later in this paper.} To the best of our knowledge, no existing work has thoroughly studied the setting of joint spatial and temporal perturbations.\footnote{The work in \cite{donze2010robust} also proposes to fix a spatial robustness level and to then compute the resulting temporal robustness.  While the authors briefly hint at a Pareto-optimal set interpretation for spatial and temporal perturbations, this is left  unexplored. } Our definition of STR will define Pareto-optimal sets of admissible disturbances in space and time that capture trade-offs between spatial and temporal robustness; see Fig.~\ref{fig:Pareto} for an illustration.

Figure~\ref{fig:motivating_example} shows three different  signals with varying degrees of robustness in {\color{myblue}blue}, {\color{myred}red}, and {\color{myorange}yellow}. Considering spatial robustness alone would favor the {\color{myorange}yellow} signal over the {\color{myred}red} one, whereas the opposite conclusion would be reached when considering temporal robustness alone. Existing work such as \cite{akazaki2015time,donze2010robust,niu2023robust} has proposed scalar robustness metrics to capture spatial and temporal robustness. However, such methods are merely heuristics and do not provide the maximum  spatial and temporal perturbations that a signal can be perturbed with jointly. A naive alternative to construct a non-scalar robustness metric would be to consider the pair of spatial and temporal robustness values, as illustrated by the dotted black box in Figure \ref{fig:Pareto}. However, this approach would optimistically over-approximate admissible spatial and temporal perturbations. Indeed, our central claim is that multi-objective reasoning is required to quantitatively capture admissible spatiotemporal perturbations. Our contributions are the following:
\begin{itemize}
    \item We define spatiotemporal robustness (STR) as a multi-objective reasoning problem  via a partial order over spatial and temporal perturbations. We show that STR is sound, i.e., that its positivity implies task satisfaction. 
    \item We propose robust semantics for STR that are computationally more tractable than STR itself. We prove that these robust semantics remain sound while under-approximating the Pareto-optimal set defined by STR.
    \item Computing the robust semantics of an arbitrary signal is generally a nonconvex multi-objective optimization problem. For signed distance predicate functions, we provide efficient monitoring algorithms to compute and propagate predicate-level robust semantics through the specification's parse tree.
    \item We present two illustrative case studies  on an F-16 fighter jet and the Waymo Open Dataset, where we show the utility of STR and its robust semantics. 
\end{itemize}

The remainder is organized as follows.
In Section~\ref{sec:background}, we summarize background on signals and systems, preexisting but separate notions of spatial and temporal robustness, and results from multi-objective optimization.
Section~\ref{sec:STR} presents a formal definition of spatiotemporal robustness along with a set-theoretic formulation of robust semantics.
Algorithms to compute these robust semantics are provided in Section~\ref{sec:algorithms_compute_robust_semantics}, followed by numerical results and related work in Sections~\ref{sec:numerical_results} and \ref{sec:related_work}, respectively.
Section~\ref{sec:conclusion} closes the paper with a brief conclusion.

\section{Background}
\label{sec:background}

\subsection{Signals and Systems}
We consider $n$-dimensional discrete-time signals  $x\colon\Z\to \R^n$, where $\Z$ denotes the set of all integers.\footnote{We note that the definition of STR applies directly to continuous-time signals. However, we restrict ourselves to discrete-time signals here for computational reasons.} We index the components of the signal $x$ at time $t\in \Z$ as $x(t)=[x_1(t),\hdots,x_n(t)]\in\R^n$. Let $\deltax\colon\Z\to \R^n$ be a spatial perturbation. We let $x_\deltax\colon\Z\to \R^n$ be a \emph{spatially perturbed} signal, for which it holds that $x_\deltax(t)=x(t)+\deltax(t)$ for all $t\in \Z$. Furthermore, let $\deltat\in \N^n$ be a temporal perturbation. Here, we let $x_\deltat\colon\Z\to \R^n$ be a \emph{temporally perturbed} signal, for which it holds that $x_\deltat(t):=[x_1(t-\deltati{1}),\hdots,x_n(t-\deltati{n})]$ for all $t\in \Z$. Finally, we let $x_{\deltax,\deltat}\colon\Z\to \R^n$ be a \emph{spatiotemporally perturbed} signal, for which it holds that $x_{\deltax,\deltat}(t)=[x_1(t-\deltati{1}),\hdots,x_n(t-\deltati{n})]+\deltax(t)$ for all $t\in \Z$.\footnote{By convention, we perturb the signal $x$ temporally before we perturb it spatially. A reversed ordering yields a different but equally valid definition; the current choice is appropriate when spatial noise is applied after temporal delay. The resulting STR curves differ but both remain sound representations.} 
To keep notation simple, we write $[a;b]:=[a,b]\cap\Z$ to denote integer-valued sets.

\begin{figure}
    \centering
    \includegraphics[width=1\linewidth, alt={The fighter jet's path passes through a collision corridor and climbs above a no-fly area. The spatial and temporal perturbations are indicated through transparent overlays of the nominal path.}]{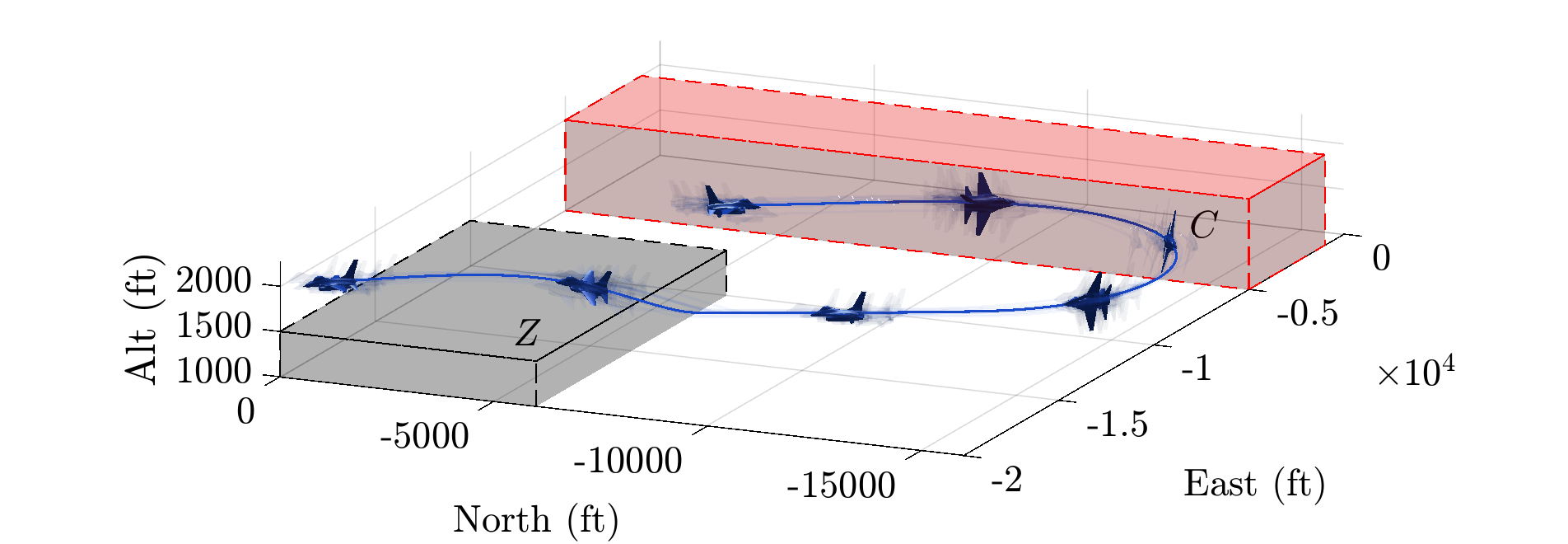}
    \caption{F-16 fighter jet flight path (nominal and 10 perturbed paths), no-fly zone $Z$ (black area), and collision corridor $C$ (red area).}
    \label{fig:flypath_simple}
\end{figure}

\begin{runexample}
    Consider a flight path $x_{\mathrm{raw}}\colon[-50;1847]\to\R^3$ of an F-16 fighter jet, obtained from the \textsc{AeroBenchVV} simulator \cite{heidlauf2018verification}, comprising the jet's relative position (in eastern and northern direction) and its altitude.
    We select and plot the sub-path $x\colon[0;1847]\to\R^3$ in Fig.~\ref{fig:flypath_simple}, allowing us to probe the signal with temporal perturbations $\deltat \in [-\Deltat;\Deltat]^3$ up to $\Deltat=50$.
    Apart from the nominal flight path (in blue), we plot 10 perturbed paths $x_{\deltax,\deltat}$ (in transparent blue) with $\deltax$ zero-mean Gaussian with variance $[100, 100, 10]$ and $\deltat$ drawn uniformly from $[-\Deltat;\Deltat]^3$.
    Here, temporal perturbations $\deltat$ account for asynchronous time shifts between the signal components, capturing effects such as sensing delays or clock misalignment, while spatial perturbations $\deltax$ correspond to deviations in the measured state, e.g., due to measurement uncertainty.
\end{runexample}

The definition of the signals $x$, $x_\deltax$, $x_\deltat$, and $x_{\deltax,\deltat}$ is generic and can capture the behavior of a broad range of systems, e.g., that of nonlinear, hybrid, and stochastic dynamical systems. We adopt the following viewpoint throughout this paper: the signal $x$ is known and nominal in the sense that it may correspond to the nominal behavior of a  system. For instance, during the design process of a system, we may have access to a model of the system, such as a simulator or a mathematical description of the system. Based on this model, we may design the system's control software, producing a nominal behavior  $x$ such that a desired  system behavior is achieved. In practice, however, the actual behavior of the system may be different from the nominal behavior $x$. Indeed, the nominal signal $x$ may be spatially, temporally, or spatiotemporally perturbed.

\subsection{Spatial and Temporal Robustness}
\label{sec:spat_temp}

\paragraph{System Specification.} Assume that we are given a desired  requirement $\phi$ that is defined over the signal $x$. For instance, $\phi$ could encode permissible driving behavior for an autonomous car \cite{hekmatnejad2019encoding}, or it could encode collision avoidance and reachability requirements for a mobile robot \cite{kress2009temporal,ahn2022can}. We let $(x,t)\models \phi$ denote the satisfaction of the requirement $\phi$ by the signal $x$ at time $t$. Such requirements --- referred to as \emph{specifications} --- could be expressed as natural language instructions, using large language models as an interface. However, they could also be expressed in a mathematical logic. While our definition of spatiotemporal robustness applies broadly to any such property, we will later adopt temporal logic as a principled means to capture system specifications. 

\paragraph{Spatial Robustness.} Now, how can we capture the satisfaction of the requirement $\phi$ in the presence of spatial perturbations? Spatial robustness --- originally presented in  \cite{fainekos2009robustness} --- provides an answer to this question and is defined as\footnote{We provide a slightly modified version from \cite{fainekos2009robustness}, where $\mathcal{SR}^\phi({x},t):= \inf_{{x}^*\in  \mathcal{L}_{\neg\phi},t} \sup_{t\ge 0}\|x(t)-x^*(t)\|$, with $\mathcal{L}_{\neg\phi,t}:=\{x\colon\Z\to\R^n \mid (x,t)\models \neg \phi\}$.}
 \begin{align}\label{eq:SR}
	\mathcal{SR}^\phi(x,t):= \sup\left\lbrace \Deltax \in\overline{\R}_{\geq0} \;\Big\vert\;   (x_\deltax, t)\models \phi,\;\forall 
    % \deltax\text{ s.t. }
    \norm{\deltax}_\infty\leq\Deltax \right\rbrace,
\end{align}
with  $\overline{\R}_{\geq0}:=\R_{\geq0}\cup \{\infty\}$ denoting the extended, non-negative real numbers and $\norm{\deltax}_\infty:= \sup_{t\in\Z} \|\deltax(t)\|$ denoting the $\infty$-signal norm where $\|\cdot\|$ can be any vector norm, e.g., the Euclidean norm.
If the signal $x$ satisfies $\phi$ at time $t$, the spatial robustness $\mathcal{SR}^\phi({x},t)$ provides a bound on the maximal spatial perturbation that can be added to $x$ without violating  $\phi$. Thus, by definition of $\mathcal{SR}^\phi({x},t)$ it immediately follows that for all perturbation levels $\Deltax\in \mathcal{SR}^\phi({x},t)$ we have that $(x_\deltax,t)\models \phi$ for all $\Deltax$-bounded disturbance signals $\deltax\colon\Z\to\R^n$. 
% $\deltax\in \mathcal{L}_{\infty,\Deltax} $.

\paragraph{Temporal Robustness.} On the other hand, temporal robustness --- originally presented in \cite{donze2010robust,lindemann2022temporal} --- can be defined as
\begin{align}\label{eq:TR}
		\mathcal{TR}^\phi({x},t) := \sup\left\lbrace \Deltat\in\overline{\Z}_{\geq0} \;\Big\vert\; (x_\deltat,t)\models \phi,\;\forall \deltat \in [-\Deltat;\Deltat]^n\right\rbrace,
\end{align}
with  $\overline{\Z}_{\geq0}:=\Z_{\geq0}\cup \{\infty\}$ denoting the extended set of non-negative integers. If the signal $x$ satisfies $\phi$ at time $t$, the temporal robustness $\mathcal{TR}^\phi({x},t)$ defines the maximum temporal perturbation $\deltat$ by which we can shift $x$ in time without violating  $\phi$. Thus, by definition of $\mathcal{TR}^\phi({x},t)$ it follows that for all perturbation levels $\Deltat\in \mathcal{TR}^\phi({x},t)$ we have that $(x_\deltat,t)\models \phi$ for all $\deltat\in [-\Deltat;\Deltat]^n$. This definition allows to asynchronously shift the signal $x$ by the vector $\deltat$, a desired property in multi-agent systems where  agents may have individual delays.

\begin{remark}[Time as Another Spatial Dimension]
    One may wonder whether treating time as an additional spatial dimension would suffice. However, encoding a temporal shift as a spatial perturbation yields a magnitude that depends on the signal itself: shifting a slowly-varying signal in time barely changes its value, while the opposite may happen for rapidly-oscillating signals. This signal-dependence makes it impossible to define a fixed, signal-independent perturbation budget, which is precisely why temporal and spatial perturbations must be treated as separate objectives in a multi-objective framework.
\end{remark}

\subsection{Multi-Objective Optimization}
\label{sec:background_multi-objective}

Optimization problems can involve the simultaneous optimization over multiple --- often conflicting --- objectives. Such problems are naturally formulated as multi-objective optimization problems. In this paper, we will encounter bi-objective optimization problems
\begin{equation}
\max_{\theta \in \Theta} \; \big(f_1(\theta), f_2(\theta)\big),
\label{eq:biobj}
\end{equation}
where $\theta \in \R^p$ denotes the decision variable, $\Theta \subseteq \R^p$ is the feasible set (typically defined by equality and inequality constraints), and $f_i \colon\R^p \to \R$ for $i \in \{1,2\}$ are two objective functions, later corresponding to spatial and temporal perturbations. In contrast to single-objective optimization, multi-objective problems generally do not admit a unique optimal solution; improving one objective may lead to degradation of another objective. A central solution concept in multi-objective optimization is \emph{Pareto optimality}, which characterizes the set of optimal trade-offs among the objectives \cite{ehrgott2005multicriteria}. A feasible point $\theta^\star \in \Theta$ is called \emph{Pareto optimal} if there exists no other $\theta \in \Theta$ such that
\[
f_i(\theta) \ge f_i(\theta^\star) \; \text{for all } i \in \{1,2\}  \text{ and }  f_j(\theta) > f_j(\theta^\star) \; \text{for some } j \in \{1,2\}.
\]
The set of all Pareto optimal points is referred to as the Pareto front. Several  techniques have been proposed to compute Pareto-optimal solutions \cite{miettinen1999nonlinear}. Among them, the \emph{$\varepsilon$-constraint method} is particularly attractive due to its simplicity and its ability to recover Pareto-optimal solutions even when the Pareto front is non-convex \cite{ehrgott2005multicriteria}.  The $\varepsilon$-constraint method proceeds by selecting one objective to optimize directly while converting the other objective into a constraint, e.g., via
\begin{equation}
\begin{aligned}
\max_{\theta \in \Theta} \quad & f_1(\theta) &
\text{s.t.} \quad & f_2(\theta) \ge \varepsilon,
\end{aligned}
\label{eq:epsilon_constraint}
\end{equation}
where $\varepsilon \in \R$ is a user-specified parameter. Intuitively, the parameter $\varepsilon$ encodes an admissible performance level for the constrained objective and allows the decision maker to explore trade-offs between objectives in a systematic manner.  By solving~\eqref{eq:epsilon_constraint} for different values of $\varepsilon$, one obtains a set of solutions that, under mild regularity assumptions, are Pareto optimal for the original multi-objective problem~\eqref{eq:biobj}; see \cite[Part~II]{miettinen1999nonlinear}. Generally, i.e., without making any such regularity assumptions, the $\varepsilon$-constraint method recovers all weakly Pareto optimal solutions \cite[Theorem~3.2.1]{miettinen1999nonlinear}, where a feasible point $\theta^\star \in \Theta$ is called \emph{weakly Pareto optimal} if there exists no other $\theta \in \Theta$ such that
$f_i(\theta) > f_i(\theta^\star)$ for all $i \in \{1,2\}$.

\section{Spatiotemporal Robustness}\label{sec:STR}

While multiple variations of $\mathcal{SR}^\phi({x})$ and $\mathcal{TR}^\phi({x})$ have appeared \cite{gilpin2020smooth,mehdipour2019arithmetic,mehdipour2024generalized,haghighi2019control,akazaki2015time}, their drawback is immediate: they are scalars that do not capture spatiotemporal perturbations, not even when considering the tuple $(\mathcal{SR}^\phi({x}),\mathcal{TR}^\phi({x}))$; see Fig.~\ref{fig:Pareto}. Spatiotemporal robustness requires {multi-objective reasoning via Pareto optimal sets that capture trade-offs between spatial and temporal robustness}. 

\subsection{Spatiotemporal Robustness as Multi-Objective Reasoning}
\label{sec:def_spat}
To present our definition of spatiotemporal robustness, we first need to introduce a partial order $\preccurlyeq$ on $\overline{\R}_{\geq0}\times \overline{\Z}_{\geq0}$, for which we consider the product order. 

\begin{definition}[Partial Order]
\label{def:par_order}
    Let $(a,b),(a',b')\in D$ be two points within a set $D\subseteq \overline{\R}_{\geq0}\times \overline{\Z}_{\geq0}$. Let $\succcurlyeq$ be a \emph{partial order} such that $(a,b) \succcurlyeq (a',b')$  if and only if $a\ge a'$ and $b\ge b'$. Furthermore, let $\succ$ be a \emph{strict partial order} such that $(a,b) \succ (a',b')$  if and only if $(a,b) \succcurlyeq (a',b')$ and  $(a,b) \neq (a',b')$. Then, we say that the point $(a,b)\in D$ is \emph{maximal} if there exists no other point $(a',b')\in D$ such that $(a',b') \succ (a,b)$.\footnote{If the set $D$ is empty, then $\max(D)$ is empty, i.e., $D=\emptyset$ implies $\max(D)=\emptyset$. } 
\end{definition}

We have purposefully presented the definitions of spatial and temporal robustness as a maximization problem, thereby revealing  structural similarity. This motivates the definition of spatiotemporal robustness, albeit different using maximal points instead to account for the multi-objective nature of the definition.

\begin{definition}[Spatiotemporal Robustness (STR)]\label{def:str}
Let $x\colon\Z\to \R^n$ be a signal, $\phi$ be a specification defined over $x$, and $t\in\Z$ be a time. Then, we define  \emph{spatiotemporal robustness} as
\begin{align}\label{eq:STR}
    \mathcal{STR}^\phi({x},t):= \max\left(D^\phi(x,t)\right)\subseteq \overline{\R}_{\geq0}\times \overline{\Z}_{\geq0}.
\end{align}
where the $\max(\cdotx)$ operator computes all maximal points over the domain
\begin{equation*}
    D^\phi(x,t):=\left\lbrace (\Deltax,\Deltat)
    % \in\overline{\R}_{\geq0}\times \overline{\Z}_{\geq0} 
    \;\Big\vert\; (x_{\deltax,\deltat},t) \models \phi,\;
    \forall\norm{\deltax}_\infty\leq\Deltax,\forall\deltat\in[-\Deltat;\Deltat]^n  \right\rbrace,%\label{eq:feasible_domain}
\end{equation*}
which captures all spatiotemporal perturbation levels up to which we can perturb the signal $x$ without changing the satisfaction of $\phi$ at time $t$.
\end{definition}

Note that the maximal points of the set $D^\phi(x,t)$ are equivalent to the Pareto optimal points of a multi-objective optimization problem with decision variables $\theta:=[\Deltax,\Deltat]$, feasible set $\Theta:=D^\phi(x,t)$, and objective functions $f_1(\theta):=\Deltax$ and $f_2(\theta):=\Deltat$.    In other words, for a point in $\mathcal{STR}^\phi({x},t)$ there does not exist any other point in $D^\phi(x,t)$ that increases spatial or temporal robustness without decreasing the other. The spatiotemporal robustness $\mathcal{STR}^\phi({x},t)$, consisting of all Pareto optimal points, therefore defines a Pareto optimal set that reveals important trade-offs, recall Figure~\ref{fig:motivating_example}. The intuition  is as follows: given a tuple $(\Deltax,\Deltat)\in \mathcal{STR}^\phi({x},t)$, we know that the signal $x$ can simultaneously sustain  spatial and temporal perturbations at  levels of  $\Deltax$ and $\Deltat$, respectively. The next result trivially follows by definition of spatiotemporal robustness. 
\begin{corollary}[STR Implies Robust Task Satisfaction]
	\label{cor:1}
    Let $x\colon\Z\to \R^n$ be a signal, $\phi$ be a specification defined over $x$, and $t\in\Z$ be a time. Assume further that $(x,t)\models \phi$ holds. For each joint perturbation level $(\Deltax,\Deltat)\in \mathcal{STR}^\phi({x},t)$, it holds that $(x_{\deltax,\deltat},t)\models \phi$ for all spatiotemporal perturbations $(\deltax,\deltat)$ such that $\norm{\deltax}_\infty\leq\Deltax$ and $\deltat\in[-\Deltat;\Deltat]^n$.
\end{corollary}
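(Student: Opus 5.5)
The argument is a direct unfolding of Definition~\ref{def:str}; no machinery beyond the definitions of maximal points and of $D^\phi(x,t)$ is needed. Fix an arbitrary $(\Deltax,\Deltat)\in \mathcal{STR}^\phi(x,t)$.

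\textbf{Step 1: membership in the domain.} By Definition~\ref{def:par_order}, a maximal point of a set $D$ is by definition an element of $D$ that no other element of $D$ strictly dominates. The operator $\max(\cdotx)$ only ever returns elements of its argument. Hence $\mathcal{STR}^\phi(x,t)=\max(D^\phi(x,t))\subseteq D^\phi(x,t)$, and in particular $(\Deltax,\Deltat)\in D^\phi(x,t)$.

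\textbf{Step 2: reading off the defining property.} Membership in $D^\phi(x,t)$ means exactly that $(x_{\deltax,\deltat},t)\models\phi$ for every $\deltax$ with $\norm{\deltax}_\infty\le\Deltax$ and every $\deltat\in[-\Deltat;\Deltat]^n$. This is precisely the claimed conclusion.

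\textbf{Remarks on edge cases.} The assumption $(x,t)\models\phi$ is not actually used in the implication. It only guarantees that the statement is non-vacuous, since it makes $(0,0)\in D^\phi(x,t)$ by taking $\deltax\equiv 0$ and $\deltat=0$. Whether $\max(D^\phi(x,t))$ is nonempty plays no role either: if it were empty, the statement would hold vacuously.

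The one point deserving care is infinite values, $\Deltax=\infty$ or $\Deltat=\infty$. There the conditions $\norm{\deltax}_\infty\le\infty$ and $\deltat\in[-\infty;\infty]^n$ should be read as ``all $\deltax$'' and ``all $\deltat\in\Z^n$''. The definition of $D^\phi(x,t)$ uses these same quantifiers, so the conclusion transfers verbatim. Consequently there is no real obstacle; the proof is one line once Step~1 is made explicit.
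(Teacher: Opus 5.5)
Your proof is correct and matches the paper, which says only that the corollary ``trivially follows by definition''; your Steps 1--2, namely $\mathcal{STR}^\phi(x,t)=\max(D^\phi(x,t))\subseteq D^\phi(x,t)$ followed by reading off membership in $D^\phi(x,t)$, are exactly that argument made explicit. One small aside: $(0,0)\in D^\phi(x,t)$ does not by itself make the statement non-vacuous, because a nonempty downward-closed set need not have maximal points; as you already note, an empty $\mathcal{STR}^\phi(x,t)$ is harmless for the proof.
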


%\textcolor{red}{Can we reason about some monotonicity property of spatiotemporal robustness (and the robust semantics presented later)?}

\begin{runexample}
    Suppose we want to determine a joint perturbation level $(\Deltax,\Deltat)\in\overline{\R}_{\geq0}\times \overline{\Z}_{\geq0}$ under which the fighter jet in Fig.~\ref{fig:flypath_simple} successfully steers clear of the no-fly zone $Z\subseteq\R^3$ (in black). 
    This amounts to computing the STR of a simple avoidance specification $\phi_{\mathrm{avoid}}$, which is satisfied if $x_{\deltax,\deltat}(t)\notin Z$ for all timesteps $t\in[0;1847]$ and disturbances $\norm{\deltax}_\infty\leq\Deltax$ and $\deltat\in[-\Deltat;\Deltat]^3$; then, $(\Deltax,\Deltat)\in \mathcal{STR}^{\phi_{\mathrm{avoid}}}({x},0)$. We plot the STR of $\phi_{\mathrm{avoid}}$ --- i.e., the set of all such Pareto optimal points $(\Deltax,\Deltat)$ --- in Fig.~\ref{fig:F16_STR} (details in Section \ref{sec:numerical_results}). The remainder of the paper focuses on how this set can be computed efficiently.
\end{runexample}

\subsection{Spatiotemporal Robustness for Temporal Logic Specifications}
The Pareto optimal set $\mathcal{STR}^\phi({x},t)$ can be computed by solving the aforementioned multi-objective optimization problem. However, this turns out to be computationally challenging in full generality, i.e., for arbitrary specifications $\phi$, due to the search over signals $\deltax$ satisfying $\norm{\deltax}_\infty\leq\Deltax$. To obtain a tractable algorithm for the computation of $\mathcal{STR}^\phi({x},t)$, we use \emph{signal temporal logic} (STL) specifications and their inherent structure \cite{maler2004monitoring}, see also \cite{bartocci2018specification,lindemann2025formal} for surveys. STL builds on atomic elements, so called predicates. The truth value of a predicate $\mu\colon\R^n\to\mathbb{B}$ is obtained after evaluation of a  function $h^\mu\colon\R^n\to\R$, which we refer to as the predicate function. In other words, we have
\begin{align}\label{eq:STL_predicate}
\mu({x}):=\begin{cases}
\top & \text{if } h^\mu({x})\ge 0\\
\bot &\text{otherwise.}
\end{cases}
\end{align} 
Building on top of predicates, the STL syntax\footnote{The later defined robust semantics for $\mathcal{STR}^\phi({x},t)$ would require careful attention for negation operators. Instead, we consider the STL syntax to be in positive normal form, meaning that $\phi$ does not contain  negations, by replacing the negation operator with disjunction, eventually, and always operators. 
Note that this also applies to the release operator in discrete time, which can be recursively defined: 
$\phi \,\mathbf{R}_{[a,b]}\, \psi = (\mathbf{G}_{[a,b]}\,\psi) \vee (\vee_{k=a}^{b}(\mathbf{G}_{[a,k]}\,\psi \wedge \mathbf{F}_{[k,k]}\,\phi))$.
This is without loss of generality as every STL task $\phi$ can be written as an STL task in positive normal form  \cite{sadraddini2015robust}.} is expressed as
\begin{align}\label{eq:full_STL}
\phi \; ::= \; \top \; | \; \mu  \; | \; \phi' \wedge \phi'' \; | \; \phi' \vee \phi'' \; | \;  \G_I \phi' \; | \; \F_I \phi' \; | \; \phi'  \U_I \phi''  \,
\end{align}
where $\phi'$ and $\phi''$ are already STL formulas and  $\top$, $\wedge$, $\vee$, $\G_I$, $\F_I$, and $\U_I$ with $I\subseteq \Z_{\ge 0}$ denote the true symbol, conjunction, disjunction, always, eventually, and until operators, respectively. The STL semantics are defined as a relation $\models$ between the signal $x$ and the STL formula $\phi$, and $(x, t)\models \phi$ indicates that the signal $x$ satisfies the formula $\phi$
at time $t$, while $(x, t)\not\models \phi$ indicates the opposite.

\begin{definition}[STL Qualitative Semantics]\label{def:qualitative_semantics}
	Let $x\colon\Z\to \R^n$ be a signal, $\phi$ be an STL specification, and $t\in\Z$ be a time. The \emph{semantics} of $\phi$ are recursively defined over its structure as
	\begin{align*}
	 (x,t) \models
	 \top  &\;\;\;\text{ iff } \;\;\;	\text{holds by definition}, \\
	 ({x},t) \models
	 \mu   &\;\;\;\text{ iff }\;\;\;	h^\mu({x}(t))\ge 0\\
	 ({x},t) \models \phi' \wedge \phi'' &\;\;\;\text{ iff }\;\;\; ({x},t) \models \phi' \text{ and } ({x},t) \models \phi''\\
     ({x},t) \models \phi' \vee \phi'' &\;\;\;\text{ iff }\;\;\; ({x},t) \models \phi' \text{ or } ({x},t) \models \phi''\\
     ({x},t) \models  \G_I  \phi' &\;\;\;\text{ iff }\;\;\; \forall t' \in t\oplus I \colon ({x},t')\models \phi' \\
	 ({x},t) \models \F_I \phi' &\;\;\;\text{ iff }\;\;\; \exists t' \in t\oplus I \text{ s.t. }({x},t')\models \phi' \\
	 ({x},t) \models \phi' \U_I \phi'' &\;\;\;\text{ iff }\;\;\; \exists t'' \in t\oplus I \text{ s.t. }({x},t'')\models \phi'' \\
	 & \;\;\;\;\;\;\;\;\;\;\, \text{ and } \forall t'\in [t;t'']\colon\, ({x},t') \models \phi'.
	\end{align*}
\end{definition}

\begin{runexample}
    The specification for avoiding the no-fly zone $Z$ in Fig.~\ref{fig:flypath_simple} can be written as $\phi_{\mathrm{avoid}} := \G_{[0;1847]}(x(t)\notin Z)$, where $\mu_{\mathrm{avoid}}(x(t)):=(x(t)\notin Z)$ acts as the predicate. 
    This can be easily expressed in the form \eqref{eq:STL_predicate} by selecting the predicate function $h^{\phi_{\mathrm{avoid}}}(x(t)):=\sd(x(t), Z)$ via the \emph{signed distance function}
    \begin{equation}
        \sd(z, S) := \begin{cases}
            -\inf_{z'\in\partial S} \norm{z - z'}, &\text{if } z\in S\\
            \phantom{-}\inf_{z'\in\partial S} \norm{z - z'}, &\text{if } z\not\in S\\
            \phantom{-}0, &\text{if } z\in\partial S
        \end{cases}\label{eq:signed_distance_fcn}
    \end{equation}
    defined for a point $z\in\R^n$ and set $S\subseteq\R^n$, where $\partial S$ is the boundary of $S$.
\end{runexample}

For an STL specification $\phi$, it is well established that the computation of the spatial and temporal robustness $\mathcal{SR}^\phi({x},t)$ and $\mathcal{TR}^\phi({x},t)$ as defined in Section~\ref{sec:spat_temp}  can be challenging. Indeed, their computation is intractable for most STL specifications as it involves computing the satisfaction of $\phi$ over sets of signals. This has motivated more tractable notions like the robust semantics  \cite{fainekos2009robustness,donze2010robust}, which are computed recursively over the structure of $\phi$ and provide an efficient under-approximation.  We provide their definition for $\mathcal{SR}^\phi({x},t)$ in Appendix~\ref{app:robust_semantics}, while their definition for $\mathcal{TR}^\phi({x},t)$ follows almost analogously \cite{donze2010robust}.  The aforementioned computational challenges remain for the spatiotemporal robustness $\mathcal{STR}^\phi({x},t)$, motivating us next to define robust semantics for $\mathcal{STR}^\phi({x},t)$.

\subsection{Robust Semantics for Spatiotemporal Robustness}

It is  worth recalling the close connection between logic and set theory. As such, the $\forall$ and $\exists$ operators in Definition \ref{def:qualitative_semantics} can be interpreted as union and intersection operations over sets. To define robust semantics for $\mathcal{STR}^\phi({x},t)$, we will therefore define how we can combine multiple Pareto optimal sets. Specifically, we have to define the minimum and maximum operators for Pareto optimal sets, which will later correspond to performing conjunction and disjunction operations.
\begin{definition}[Minimum and Maximum]\label{def:min_max}
    Let $D^1,D^2\in\R^2$ be two sets; potentially Pareto optimal. The \emph{downward closure} of set $D^i$, $i\in \{1,2\}$ is
    \begin{align*}
        D^i_\downarrow \!:=\!\!\left\lbrace(\Deltax',\Deltat')\in \overline{\R}_{\geq0}\!\times\! \overline{\Z}_{\geq0} \;\Big\vert\; \exists (\Deltax,\Deltat) \in D^i \!\text{ s.t. } (\Deltax,\Deltat) \!\succcurlyeq\! (\Deltax',\Deltat') \right\rbrace\!.
    \end{align*}
    With this, we define the \emph{minimum} (resp. \emph{maximum}) of $D^1$ and $D^2$ as $\min\{D^1,\allowbreak D^2\}\allowbreak:=\allowbreak\max\{D^1_\downarrow \cap D^2_\downarrow\}$ (resp. $\max\{D^1,D^2\}:=\max\{D^1 \cup D^2\}$). 
\end{definition}

\begin{figure}
\centering
    \def\svgwidth{.8\linewidth}
    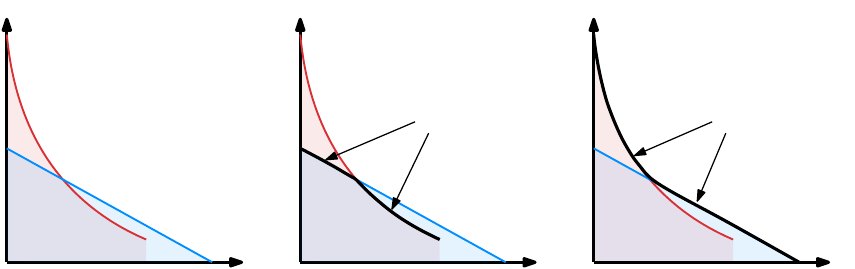

    \caption{Left panel: Two Pareto optimal sets $D^1$ (red line) and $D^2$ (blue line) and their downward closures $D^1_\downarrow$ (red shaded area) and $D^2_\downarrow$ (blue shaded area). Middle panel: The minimum of $D^1$ and $D^2$, i.e., $\min\{D^1,D^2\}:=\max\{D^1_\downarrow \cap D^2_\downarrow\}$. Right panel: The maximum  of $D^1$ and $D^2$, i.e., $\max\{D^1,D^2\}:=\max\{D^1 \cup D^2\}$.}
    %alt={Three panels showing the minimum and maximum of two Pareto optimal sets. The left panel shows two intersecting Pareto optimal sets D1 (red line) and D2 (blue line) along with their downward closures D1_downarrow (red shaded area) and D2_downarrow (blue shaded area). The middle panel shows the minimum of D1 and D2 as a black line. The right panel shows the maximum of D1 and D2 as a black line.}
    \label{fig:downward_closure}
\end{figure}

Geometrically, $\min\{D^1,D^2\}$ corresponds to the \emph{lower envelope} of the  Pareto optimal sets $D^1$ and $D^2$, while $\max\{D^1,D^2\}$ corresponds to their \emph{upper envelope}. The downward closure is used for the definition of $\min\{D^1,D^2\}$, while it is not needed for the definition of $\max\{D^1,D^2\}$. To illustrate why, consider Fig.~\ref{fig:downward_closure}. The left panel shows two intersecting Pareto optimal sets $D^1$ (red line) and $D^2$ (blue line) along with their downward closures $D^1_\downarrow$ (red shaded area) and $D^2_\downarrow$ (blue shaded area). The middle panel shows the minimum of $D^1$ and $D^2$ as a black line. Note that this minimum can only be obtained by computing the maximal points of $D^1_\downarrow \cap D^2_\downarrow$, while using $D^1 \cap D^2$ would not give the correct result. The right panel shows the maximum of $D^1$ and $D^2$,  which can be  obtained directly by computing the maximal points of $D^1 \cup D^2$. 
These definitions extend to more than two sets, i.e., for a countable set $I$ we define $\min_{t\in I} D^t:=\min\{D^{t_1},\min\{D^{t_2}, \hdots\}\}$ and $\max_{t\in I} D^t:=\max\{D^{t_1},\max\{D^{t_2}, \hdots\}\}$ for indices $t_1,t_2,\hdots\in I$. 

We are now ready to define the robust semantics for $\mathcal{STR}^\phi({x},t)$, which structurally resemble those for $\mathcal{SR}^\phi({x},t)$ and $\mathcal{TR}^\phi({x},t)$, but are defined for Pareto optimal sets and require using the max and min operators from Definition~\ref{def:min_max}.
\begin{definition}[Robust Semantics]\label{def:robust_semantics}
	Let  $x\colon\Z\to \R^n$ be a signal, $\phi$ be an STL specification from \eqref{eq:full_STL}, and $t\in\Z$ be a time. The \emph{robust semantics} of $\phi$ are recursively defined over its structure as
	\begin{align*}
		\rho^{\top}({x},t)& := \left\lbrace(\infty,\infty)\right\rbrace,\\
		\rho^{\mu}({x},t)& :=\mathcal{STR}^\mu({x},t),\\
		\rho^{\phi' \wedge \phi''}({x},t) &:= 	\min\left\lbrace \rho^{\phi'}({x},t),\; \rho^{\phi''}({x},t)\right\rbrace\\
        \rho^{\phi' \vee \phi''}({x},t) &:= 	\max\left\lbrace \rho^{\phi'}({x},t),\; \rho^{\phi''}({x},t)\right\rbrace\\
        \rho^{G_I \phi'}({x},t) &:= \min_{t'\in t\oplus I}  \rho^{\phi'}({x},t')\\
        \rho^{F_I \phi'}({x},t) &:= \max_{t'\in t\oplus I}  \rho^{\phi'}({x},t')\\
		\rho^{\phi' U_I \phi''}({x},t) &:= \max_{t''\in t\oplus I}  \min\left\lbrace \rho^{\phi''}({x},t''),\; \min_{t'\in [t;t'']}\rho^{\phi'}({x},t') \right\rbrace.
	\end{align*}
\end{definition}

As we show in the next section, computing the robust semantics $\rho^{\phi}({x},t)$ is  principled and computationally more tractable than computing the spatiotemporal robustness $\mathcal{STR}^\phi({x},t)$. Fortunately, the robust semantics $\rho^{\phi}({x},t)$ are an under-approximation of the spatiotemporal robustness $\mathcal{STR}^\phi({x},t)$; the following theorem formalizes this, the proof of which has been relegated to Appendix~\ref{app:proof_robust_semantics_lowerbound_str}.

\begin{theorem}[Robust Semantics Under-Approximate STR]
\label{thm:1}
    Let $x\colon\allowbreak\Z\allowbreak\to \R^n$ be a signal, $\phi$ be an STL specification from \eqref{eq:full_STL}, and $t\in\Z$ be a time. Assume further that $(x,t)\models \phi$ holds. Then, it holds that $\rho^{\phi}({x},t)$ does not lie above $\mathcal{STR}^\phi({x},t)$. Formally, this means that for each point $(\Deltax,\Deltat)\in \rho^{\phi}({x},t)$ there exists a point $(\Deltax',\Deltat')\in  \mathcal{STR}^\phi({x},t)$ such that $ (\Deltax',\Deltat') \succcurlyeq (\Deltax,\Deltat)$, or equivalently that $\mathcal{STR}^\phi_\downarrow({x},t) \supseteq \rho^{\phi}_\downarrow({x},t)$.
\end{theorem}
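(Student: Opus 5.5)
The proof goes by structural induction on $\phi$ in positive normal form. The key step is to show the stronger, time-uniform claim that for every subformula $\psi$ and every time $t'$,
\[
\rho^{\psi}_\downarrow(x,t') \subseteq D^\psi(x,t').
\]
That is, every point dominated by a point of $\rho^\psi(x,t')$ is an admissible perturbation level. This suffices for the theorem. Indeed, $D^\phi(x,t)$ is itself downward closed: shrinking $\Deltax$ or $\Deltat$ shrinks the set of perturbations that must be tolerated. Hence $(\Deltax,\Deltat)\in\rho^\phi(x,t)$ lies in $D^\phi(x,t)$, and it remains to find a maximal point of $D^\phi(x,t)$ dominating it. I will also drop the assumption $(x,t)\models\phi$ from the induction; it only serves to make the sets nonempty.

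The base cases are immediate. For $\top$, $D^\top=\overline{\R}_{\geq0}\times\overline{\Z}_{\geq0}=\{(\infty,\infty)\}_\downarrow$. For a predicate, $\rho^\mu=\max(D^\mu)$, whose downward closure lies in $D^\mu$ by downward closedness. For the inductive steps, fix $(\Deltax,\Deltat)\in\rho^\psi_\downarrow(x,t)$ and arbitrary perturbations $\norm{\deltax}_\infty\leq\Deltax$ and $\deltat\in[-\Deltat;\Deltat]^n$.

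\emph{Conjunction.} The point lies in $\max\{\rho^{\phi'}_\downarrow\cap\rho^{\phi''}_\downarrow\}_\downarrow\subseteq\rho^{\phi'}_\downarrow\cap\rho^{\phi''}_\downarrow$. By the induction hypothesis it lies in $D^{\phi'}\cap D^{\phi''}$, so the same $(\deltax,\deltat)$ satisfies both conjuncts.

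\emph{Disjunction.} The point is dominated by a point of $\rho^{\phi'}\cup\rho^{\phi''}$, hence lies in one of the two downward closures. It therefore belongs to one of $D^{\phi'}$ or $D^{\phi''}$. This choice is independent of $(\deltax,\deltat)$, which is why disjunction is sound.

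\emph{Temporal operators.} $\G_I$, $\F_I$ and $\U_I$ reduce to iterated min/max over times $t'\in t\oplus I$. The essential observation is that the semantics of $\G_I$, $\F_I$ and $\U_I$ evaluate the same perturbed signal $x_{\deltax,\deltat}$ at the shifted times $t'$. The time shift acts on the signal, not on the evaluation index, so the induction hypothesis at time $t'$ applies verbatim to the same perturbation. For $\F_I$, the witness $t'$ is again chosen independently of the perturbation.

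The main obstacle is the min/max over possibly infinite index sets, together with the existence of maximal points. Two facts are needed. First, $\max(S)_\downarrow\subseteq S_\downarrow$, which is trivial. Second, to conclude that a point of $D^\phi$ is dominated by some maximal point of $D^\phi$, one needs the reverse inclusion $S\subseteq\max(S)_\downarrow$. That holds only when suprema are attained, and $\max$ of an open-type set could be empty. I would establish this by showing that $D^\phi$ is closed in the first coordinate: if $x_{\deltax,\deltat}$ satisfies $\phi$ for all $\norm{\deltax}_\infty\leq\Deltax_k$ with $\Deltax_k\uparrow\Deltax^*$, then the condition at $\Deltax^*$ must be checked. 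Since the second coordinate is integer valued, a Zorn/compactness argument on $\overline{\R}_{\geq0}\times\overline{\Z}_{\geq0}$ would then give maximal dominating points. If closedness fails for strict-boundary cases, I would fall back on interpreting the statement as the inclusion $\rho^\phi_\downarrow\subseteq D^\phi$, which is exactly what the induction proves, and which coincides with $\mathcal{STR}^\phi_\downarrow$ whenever maxima are attained. The finite-interval case, with finitely many min/max operations, is unproblematic, since the set-theoretic facts needed there are elementary.
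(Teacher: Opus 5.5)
Your induction is the paper's argument. The appendix proves $\rho^{\phi}(x,t)\subseteq D^{\phi}(x,t)$ by structural induction: downward closedness of $D$ handles conjunctions, $D^{\phi'}\cup D^{\phi''}\subseteq D^{\phi'\vee\phi''}$ handles disjunctions, and $\G_I$, $\F_I$, $\U_I$ are treated as iterated conjunctions/disjunctions. Your ``stronger'' claim $\rho^\psi_\downarrow\subseteq D^\psi$ is equivalent to this, since $D^\psi$ is downward closed.

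The paper then says $\mathcal{STR}^\phi_\downarrow(x,t)\supseteq\rho^\phi_\downarrow(x,t)$ follows ``trivially'' from $\mathcal{STR}^\phi=\max(D^\phi)$. You are right that this needs $D^\phi\subseteq\max(D^\phi)_\downarrow$. This is a genuine gap, in the paper's proof and in yours, because your closedness argument is only announced. It cannot be closed without extra hypotheses, since the statement is false as written.

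Take $n=1$, $x\equiv0$, $t=0$ and $\phi=\mu\vee\mu'$, with $h^{\mu}(z)=1$ for $|z|<1$, $h^{\mu}(z)=-1$ otherwise, and $h^{\mu'}(z)=1/2-|z|$. Then $D^{\mu}(x,0)=D^{\phi}(x,0)=[0,1)\times\overline{\Z}_{\geq0}$, so $\rho^{\mu}(x,0)=\mathcal{STR}^\phi(x,0)=\emptyset$. Yet $\rho^\phi(x,0)=\rho^{\mu'}(x,0)=\{(1/2,\infty)\}$.

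Continuity of $h^\mu$ does not rescue infinite intervals either. Take $x(s)=1-1/(|s|+2)$, $h^\mu(z)=z$ and $\phi=\F_{\Z_{\geq0}}\mu$. The perturbation $\deltax\equiv-1$ violates $\phi$. For $\Deltax<1$ and any shift, $\mu$ holds at all sufficiently late times. Hence again $D^\phi(x,0)=[0,1)\times\overline{\Z}_{\geq0}$ and $\mathcal{STR}^\phi(x,0)=\emptyset$. But each $\rho^\mu(x,t')$ contains $(1/2,\infty)$, which is maximal in the union, so $\rho^\phi(x,0)=\{(1/2,\infty)\}$ (reading the countable max as the maximal points of the union). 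The first example has no temporal operators at all, so your remark that the finite-interval case is unproblematic is too optimistic.

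Your repair does go through exactly when these examples are excluded: all intervals finite and every $h^\mu$ upper semicontinuous (e.g.\ the signed distances of Section~\ref{sec:comp_predicate}).
\begin{itemize}
\item Fix $\deltat$. Satisfaction depends on finitely many vectors $\deltax(t')$, and since $\phi$ is negation-free, the satisfying set is a finite union/intersection of closed sets.
\item The closed product ball of radius $\Deltax^*$ is the closure of the union of those with radii $\Deltax_k\uparrow\Deltax^*$. Hence each column $\{\Deltax:(\Deltax,\Deltat)\in D^\phi\}$ is closed, also at $\Deltax=\infty$, since only finitely many $\deltax(t')$ matter.
\item In the second coordinate closedness is automatic: $(\Deltax,\infty)\in D^\phi$ iff $(\Deltax,\Deltat)\in D^\phi$ for all finite $\Deltat$.
\item No Zorn argument is needed. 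If $s$ is the maximum of the column at $\Deltat$, the largest $\Deltat'\in\overline{\Z}_{\geq0}$ with $(s,\Deltat')\in D^\phi$ exists. Then $(s,\Deltat')$ is a maximal point dominating every point of that column.
\end{itemize}
Outside these hypotheses, your fallback $\rho^\phi_\downarrow(x,t)\subseteq D^\phi(x,t)$ is the correct statement, and it is all that Corollary~\ref{cor:2} actually uses.
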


Theorem~\ref{thm:1} implies that we can use $\rho^{\phi}({x},t)$ instead of  $\mathcal{STR}^\phi({x},t)$ to reason about spatiotemporal robustness. Indeed, given a point $(\Deltax,\Deltat)\in \rho^{\phi}({x},t)$, Theorem~\ref{thm:1} tells us that there exists a point $(\Deltax',\Deltat')\in  \mathcal{STR}^\phi({x},t)$ that is at least as robust as $(\Deltax,\Deltat)$, i.e., such that $ (\Deltax',\Deltat') \succcurlyeq (\Deltax,\Deltat)$. From Corollary~\ref{cor:1}, it follows that the specification $\phi$ is still satisfied by the signal $x$ under spatiotemporal perturbations at level $(\Deltax,\Deltat)$, summarized next.

\begin{corollary}[Robust Semantics Capture Robust Task Satisfaction]
	\label{cor:2}
	Let $x\colon\Z\to \R^n$ be a signal, $\phi$ be an STL specification from \eqref{eq:full_STL}, and $t\in\Z$ be a time. Assume further that $(x,t)\models \phi$ holds.  For each perturbation level $(\Deltax,\Deltat)\in \rho^{\phi}({x},t)$, it holds that $(x_{\deltax,\deltat},t)\models \phi$ for all spatiotemporal perturbations $(\deltax,\deltat)$ such that $\norm{\deltax}_\infty\leq\Deltax$ and $\deltat\in[-\Deltat;\Deltat]^n$.
\end{corollary}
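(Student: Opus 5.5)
Corollary~\ref{cor:2} should follow by chaining Theorem~\ref{thm:1} with Corollary~\ref{cor:1}, plus one monotonicity observation about the domain $D^\phi(x,t)$. Fix $(\Deltax,\Deltat)\in\rho^{\phi}(x,t)$ and a perturbation pair $(\deltax,\deltat)$ with $\norm{\deltax}_\infty\leq\Deltax$ and $\deltat\in[-\Deltat;\Deltat]^n$.

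The first step invokes Theorem~\ref{thm:1}, which applies because $(x,t)\models\phi$ is assumed. It gives a point $(\Deltax',\Deltat')\in\mathcal{STR}^\phi(x,t)$ with $(\Deltax',\Deltat')\succcurlyeq(\Deltax,\Deltat)$, i.e., $\Deltax'\geq\Deltax$ and $\Deltat'\geq\Deltat$.

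The second step checks that the given perturbation is also admissible at the larger level. We have $\norm{\deltax}_\infty\leq\Deltax\leq\Deltax'$. Since $\Deltat\leq\Deltat'$, the box $[-\Deltat;\Deltat]^n$ is contained in $[-\Deltat';\Deltat']^n$, so $\deltat\in[-\Deltat';\Deltat']^n$. This is the only point needing care: the feasible perturbation sets are nested in the perturbation level, so the defining condition of $D^\phi(x,t)$ is downward closed. The case where $\Deltax'$ or $\Deltat'$ equals $\infty$ is consistent, since then the corresponding bound is vacuous.

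The final step applies Corollary~\ref{cor:1} to the level $(\Deltax',\Deltat')\in\mathcal{STR}^\phi(x,t)$ and the perturbation $(\deltax,\deltat)$. This yields $(x_{\deltax,\deltat},t)\models\phi$. Alternatively, one can read this directly off the definition of $D^\phi(x,t)$, since $\mathcal{STR}^\phi(x,t)=\max(D^\phi(x,t))\subseteq D^\phi(x,t)$.

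The only potential obstacle is the case where $\mathcal{STR}^\phi(x,t)$ is empty, since a supremum over the perturbation levels might not be attained. Theorem~\ref{thm:1} already rules this out whenever $\rho^\phi(x,t)$ is nonempty, because it asserts that the dominating point exists. If $\rho^\phi(x,t)$ is empty, the claim holds vacuously. So no further work is needed beyond citing Theorem~\ref{thm:1}.
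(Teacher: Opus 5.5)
Your proposal is correct and follows the paper's own argument: Theorem~\ref{thm:1} supplies a dominating point $(\Deltax',\Deltat')\in\mathcal{STR}^\phi(x,t)$, and Corollary~\ref{cor:1} finishes the proof, with the nesting of perturbation sets made explicit. One caution: the existence of that dominating point is exactly the attainment issue you raise, and the statement of Theorem~\ref{thm:1} glosses over it (for discontinuous $h^\mu$, for example, $D^\phi(x,t)$ need not have maximal points), so it is more robust to use the inclusion $\rho^\phi(x,t)\subseteq D^\phi(x,t)$ that the induction in the proof of Theorem~\ref{thm:1} actually establishes, from which the corollary follows directly by the definition of $D^\phi(x,t)$.
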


In summary, Theorem \ref{thm:1} and Corollary \ref{cor:2} tell us that we can use the robust semantics $\rho^\phi(x,t)$ to reason over spatiotemporal perturbations. However, compared to the spatiotemporal robustness $\mathcal{STR}^\phi({x},t)$, the robust semantics are more conservative in the sense that they may indicate a smaller set of permissible spatiotemporal perturbations. The upside of using the robust semantics is their computational tractability, as we show in the next section. 

While we assumed in Theorem~\ref{thm:1} and Corollary \ref{cor:2}  that $\phi$ is satisfied by the nominal signal $x$, i.e., that $(x,t)\models \phi$ holds, it follows by their definitions that both the spatiotemporal robustness $\mathcal{STR}^\phi({x},t)$ and the robust semantics $\rho^{\phi}({x},t)$ allow us to reason about the qualitative satisfaction of $\phi$.

\begin{corollary}[Soundness]
\label{lem:1}
    Let $x\colon\Z\to \R^n$ be a signal, $\phi$ be an STL specification from \eqref{eq:full_STL}, and $t\in\Z$ be a time. Then, it holds that
\begin{align*}
    \mathcal{STR}^\phi({x},t)\neq \emptyset \;\;\; &\implies \;\;\; (x,t)\models \phi,\\
     \rho^{\phi}({x},t)\neq \emptyset \;\;\; &\implies \;\;\; (x,t)\models \phi.
\end{align*}
\end{corollary}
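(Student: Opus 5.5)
For the first implication I would argue straight from Definition~\ref{def:str}. If $\mathcal{STR}^\phi(x,t)\neq\emptyset$, then the domain $D^\phi(x,t)$ is nonempty, since by the convention in Definition~\ref{def:par_order} the maximal set of an empty set is empty. So pick any $(\Deltax,\Deltat)\in\mathcal{STR}^\phi(x,t)\subseteq D^\phi(x,t)$. The zero perturbation $\deltax\equiv 0$, $\deltat=0$ satisfies $\norm{\deltax}_\infty=0\le\Deltax$ and $\deltat\in[-\Deltat;\Deltat]^n$, because both levels are nonnegative. Since $x_{0,0}=x$, membership in $D^\phi(x,t)$ then gives $(x,t)\models\phi$.

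For the second implication I would use structural induction on $\phi$ in positive normal form, with the inductive claim $\rho^\phi(x,t)\neq\emptyset\Rightarrow(x,t)\models\phi$ for all $t$. I would deliberately not go through Theorem~\ref{thm:1}, because that theorem assumes satisfaction. The base cases are immediate: $\top$ holds by definition, and for a predicate $\mu$ we have $\rho^\mu=\mathcal{STR}^\mu$, so the first part applies.

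The inductive steps rest on two set-level facts about Definition~\ref{def:min_max}, which I would prove first.
\begin{itemize}
\item[(i)] If $\max\{D^1\cup D^2\}\neq\emptyset$, then $D^1$ or $D^2$ is nonempty.
\item[(ii)] If $\max\{D^1_\downarrow\cap D^2_\downarrow\}\neq\emptyset$, then both $D^1$ and $D^2$ are nonempty.
\end{itemize}
Both follow because $\max(\emptyset)=\emptyset$ and $D_\downarrow=\emptyset$ exactly when $D=\emptyset$. With these facts the cases $\wedge$, $\vee$, $\G_I$, $\F_I$ and $\U_I$ mirror the qualitative semantics in Definition~\ref{def:qualitative_semantics}. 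For example, a nonempty $\max_{t'}$ yields some $t'$ with $\rho^{\phi'}(x,t')\neq\emptyset$, which gives the $\exists$. A nonempty $\min_{t'}$ yields all $\rho^{\phi'}(x,t')\neq\emptyset$, which gives the $\forall$. For $\U_I$, combining the two facts yields a $t''$ at which $\phi''$ holds and $\phi'$ holds on $[t;t'']$.

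The main obstacle is the iterated $\min$ and $\max$ over infinite or unbounded index sets. There, maximal points may fail to exist even when the underlying set is nonempty, for example an open-ended supremum over $\overline{\R}_{\geq0}$. Fact (i) and fact (ii) as used above only need the implication ``$\max$ nonempty $\Rightarrow$ argument nonempty'', which is trivially true because $\max(D)\subseteq D$. So this direction is safe, and I would make the recursive definition precise for countable $I$ by unfolding the nested definition. For infinite $I$ I would restrict to the finite-horizon intervals used in discrete time, so that the nested recursion is well-defined and the induction goes through by finitely many applications of (i) and (ii).
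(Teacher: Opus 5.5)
Your proof is correct. Your first implication is exactly the definitional argument the paper intends: the zero perturbation is admissible at every level, and $x_{0,0}=x$.

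For the second implication the paper gives no argument beyond ``it follows by their definitions.'' The natural way to fill this in from the paper's own material uses the appendix proof of Theorem~\ref{thm:1}. That structural induction actually establishes the unconditional inclusion $\rho^\phi(x,t)\subseteq D^\phi(x,t)$, since none of its cases uses the hypothesis $(x,t)\models\phi$. So $\rho^\phi(x,t)\neq\emptyset$ yields a point of $D^\phi(x,t)$, and your zero-perturbation argument finishes. You were right not to invoke the \emph{statement} of Theorem~\ref{thm:1}, which would be circular, but that inclusion was available to reuse.

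Instead you run a parallel induction that tracks only nonemptiness. Its advantage is that it is lighter. You never need $D^{\phi'\wedge\phi''}=D^{\phi'}\cap D^{\phi''}$, $D^{\phi'\vee\phi''}\supseteq D^{\phi'}\cup D^{\phi''}$, or downward-closedness of $D^\phi$. Nor do you need the fact that the perturbation quantifiers commute with the time quantifiers in $\G_I$, $\F_I$, $\U_I$. The only set-level fact you use is $\max(S)\subseteq S$.

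The inclusion route has two advantages of its own. It requires no new induction, and it gives a stronger conclusion. Every point of $\rho^\phi(x,t)$ is a certified perturbation level even without assuming satisfaction, and soundness falls out as a by-product.

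Your restriction to finite horizons is unnecessary. The paper's nested definition of $\min_{t'\in I}$ and $\max_{t'\in I}$ is not well-founded for infinite $I$. The proof of Theorem~\ref{thm:1} glosses over the same point by treating $\G_I$ as $|t\oplus I|-1$ conjunctions. Under the natural set-level reading, $\max\bigl(\bigcap_{t'}\rho^{\phi'}_\downarrow(x,t')\bigr)$ and $\max\bigl(\bigcup_{t'}\rho^{\phi'}(x,t')\bigr)$, your facts (i) and (ii) hold verbatim for arbitrary index sets, because they again use only $\max(S)\subseteq S$. You can therefore cover unbounded operators such as $\G_{\geq 581}$ from the running example without narrowing the statement.
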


\section{Monitoring Algorithms for the Robust Semantics}\label{sec:algorithms_compute_robust_semantics}

In the previous section, we have introduced the robust semantics $\rho^\phi(x,t)$ as an alternative for the spatiotemporal robustness $\mathcal{STR}^\phi({x},t)$. While we showed that $\rho^\phi(x,t)$ can be a conservative alternative, we indicated already that  $\rho^\phi(x,t)$ may generally be easier to compute than $\mathcal{STR}^\phi({x},t)$. Indeed, taking a look at Definition \ref{def:robust_semantics} reveals that \emph{computing the robust semantics $\rho^\phi(x,t)$ for an STL specification $\phi$ only requires computing the robust semantics $\rho^\mu(x,t)$ for predicates $\mu$}, which is equivalent to computing the spatiotemporal robustness $\mathcal{STR}^\mu({x},t)$, followed by recursively applying min and max operations. The core challenge therefore reduces to computing the  spatiotemporal robustness $\rho^\mu(x,t)\equiv\mathcal{STR}^\mu({x},t)$ for predicates $\mu$. We present algorithms to compute $\rho^\mu(x,t)$ in Section \ref{sec:comp_predicate}, while we summarize the overall algorithm to compute  $\rho^\phi(x,t)$ in Section \ref{sec:comp_overall}.  

\begin{runexample}
    Besides the safety specification $\phi_{\mathrm{avoid}}$ defined earlier, consider two additional requirements:
    \begin{itemize}
        \item $\phi_{\mathrm{threat}}:=\G_{\geq 581} \left(\sd(x,C) > 0\right)$: The jet must clear the threat corridor $C\subset\R^3$ (red box in Fig.~\ref{fig:flypath_simple}) by time $t=581$;
        \item $\phi_{\mathrm{climb}}:=\G_{\{1349\}}\left(\left(1600-x_3> 0\right) \wedge \F_{[0;300]} \left(x_3-1800> 0\right)\right)$: Between $t=1349$ and $t=1649$, the jet must climb from an altitude below 1600~ft to above 1800~ft.
    \end{itemize}

    Consistent with the setting of Theorem~\ref{thm:1} and Corollary \ref{cor:2}, the jet's flight path $x$ satisfies the overall specification $\phi:=\phi_{\mathrm{avoid}} \wedge \phi_{\mathrm{threat}} \wedge \phi_{\mathrm{climb}}$, visualized in the parse tree in Fig.~\ref{fig:spec_parsetree_fighterjet}.
    The following sections provide algorithms for computing the robust semantics $\rho^\phi(x,t)$ bottom-up, starting at the predicate-level leaf nodes of the parse tree (highlighted nodes in Fig.~\ref{fig:spec_parsetree_fighterjet}), and propagating Pareto optimal sets upward through application of min/max operations (recall Definition~\ref{def:min_max}).
    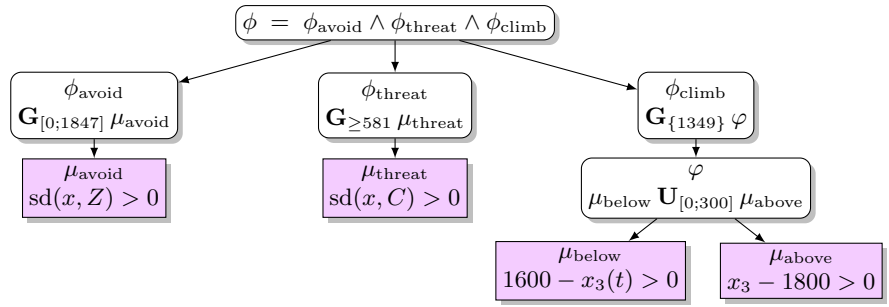
\begin{figure}
        \centering
        \begin{tikzpicture}[
          scale=1,
          level distance=11mm,
          sibling distance=40mm,
          every node/.style={draw, fill=white, rounded corners, align=center, inner sep=2.5pt, drop shadow},
          edge from parent/.style={draw,-latex}
        ]
        \node {$\phi \;=\; \phi_{\mathrm{avoid}} \wedge \phi_{\mathrm{threat}} \wedge \phi_{\mathrm{climb}}$}
          child { node {$\phi_{\mathrm{avoid}}$\\$\G_{[0;1847]}\, \mu_{\mathrm{avoid}}$}
            child { node[fill=accent!20!white, sharp corners] {$\mu_{\mathrm{avoid}}$\\$\sd(x,Z)>0$} }
          }
          child { node {$\phi_{\mathrm{threat}}$\\$\G_{\geq 581}\, \mu_{\mathrm{threat}}$}
            child { node[fill=accent!20!white, sharp corners] {$\mu_{\mathrm{threat}}$\\$\sd(x,C)>0$} }
          }
          child { node {$\phi_{\mathrm{climb}}$\\$ \G_{\{1349\}}\,\varphi$}
            child { node {$\varphi$\\$\mu_{\mathrm{below}}\wedge \F_{[0;300]}\,\mu_{\mathrm{above}}$}
            % tighten ONLY these two children:
            child[sibling distance=28mm] { node[fill=accent!20!white, sharp corners] {$\mu_{\mathrm{below}}$\\$1600-x_3(t)>0$} }
            child[sibling distance=28mm] { node[fill=accent!20!white, sharp corners] {$\mu_{\mathrm{above}}$\\$x_3-1800>0$} }
            }
          };
        \end{tikzpicture}
        \caption{Parse tree of the fighter-jet benchmark specification.}
        %alt={A parse tree of the fighter-jet benchmark specification. The root node is labeled with the overall specification phi, which is a conjunction of three sub-specifications: phi_avoid, phi_threat, and phi_climb. Each sub-specification has its own child nodes representing the predicates involved in the specification. The leaf-level predicate nodes are highlighted in a different color.}
        \label{fig:spec_parsetree_fighterjet}
    \end{figure}
\end{runexample}

\subsection{Computing the Robust Semantics of Predicates}
\label{sec:comp_predicate}

It can be seen that $\rho^\mu(x,t)$ --- which was defined as $\mathcal{STR}^\mu({x},t)= \max\left(D^\mu(x,t)\right)$  --- is equivalent to the Pareto optimal points of a multi-objective optimization problem with decision variables $\theta:=[\Deltax,\Deltat]$, feasible set $\Theta:=D^\mu(x,t)$, and objective functions $f_1(\theta):=\Deltax$ and $f_2(\theta):=\Deltat$. The next result presents an equivalent formulation  of this  optimization problem by re-writing the feasible set $D^\mu(x,t)$, the proof of which has been relegated to Appendix~\ref{app:proof_equivalent_representation}.

\begin{theorem}[Equivalent Representation for $\boldsymbol{\rho^\mu(x,t)}$]
\label{thm:2}
    Let $x\colon\Z\allowbreak\to \R^n$ be a signal, $\phi$ an STL specification from \eqref{eq:full_STL}, and $t\in\Z$ a time. Then, $\rho^\mu(x,t)$ is equivalent to the Pareto optimal points of the following multi-objective optimization problem
    \begin{subequations}
\begin{align}
     \max_{\Deltax,\Deltat}\;&  (\Deltax,\Deltat),\label{eq:MO_prob_objfcn} \\
\text{s.t.}\; &
  g_{x,t}^\mu(\Deltax,\Deltat)\ge 0,\label{eq:MO_prob_constr}
\end{align}\label{eq:MO_prob}%
\end{subequations}
where the robust predicate margin $g_{x,t}^\mu(\Deltax,\Deltat)$ is defined as
\begin{equation}
    g_{x,t}^\mu(\Deltax,\Deltat) := \inf_{\substack{\norm{\overline{\deltax}}\leq\Deltax\\\deltat\in[-\Deltat;\Deltat]^n}} 
    h^\mu \!\left(\phantom{\begin{bmatrix}
        x_1(t-\deltati{1})\\\vdots\\x_n(t-\deltati{n})
    \end{bmatrix}}\right.\hspace{-6.5em}
        \underbrace{\begin{bmatrix}
        x_1(t-\deltati{1})\\\vdots\\x_n(t-\deltati{n})
    \end{bmatrix} + \overline{\deltax}}_{{x_{\overline{\deltax},\deltat}(t)}}
    \hspace{-6.5em}\left.\phantom{\begin{bmatrix}
        x_1(t-\deltati{1})\\\vdots\\x_n(t-\deltati{n})
    \end{bmatrix}}\right)\label{eq:inner_program}
\end{equation}
with $\overline{\deltax}\in\R^n$ being a vector, denoting momentary spatial perturbations.
\end{theorem}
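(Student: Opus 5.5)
The plan is to show that the feasible set $D^\mu(x,t)$ from Definition~\ref{def:str} coincides with $\{(\Deltax,\Deltat)\in\overline{\R}_{\geq0}\times\overline{\Z}_{\geq0} \mid g^\mu_{x,t}(\Deltax,\Deltat)\ge 0\}$. Since $\rho^\mu(x,t)=\mathcal{STR}^\mu(x,t)=\max(D^\mu(x,t))$ by Definition~\ref{def:robust_semantics}, and the maximal points of a set under the product order of Definition~\ref{def:par_order} are exactly the Pareto optimal points of \eqref{eq:MO_prob} with that set as feasible set, equality of the two sets gives the claim directly.

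\textbf{Reduction to a single time instant.} By Definition~\ref{def:qualitative_semantics}, $(x_{\deltax,\deltat},t)\models\mu$ if and only if $h^\mu(x_{\deltax,\deltat}(t))\ge0$. The perturbed value $x_{\deltax,\deltat}(t)=[x_1(t-\deltati{1}),\dots,x_n(t-\deltati{n})]+\deltax(t)$ depends on the signal $\deltax$ only through the single vector $\deltax(t)$. Two facts link the signal constraint to the vector constraint:
\begin{itemize}
\item[(a)] Any signal with $\norm{\deltax}_\infty\le\Deltax$ satisfies $\|\deltax(t)\|\le\sup_{s}\|\deltax(s)\|\le\Deltax$.
\item[(b)] Conversely, any vector $\overline{\deltax}\in\R^n$ with $\|\overline{\deltax}\|\le\Deltax$ is realized by the signal that equals $\overline{\deltax}$ at time $t$ and $0$ elsewhere, whose $\infty$-norm is $\|\overline{\deltax}\|\le\Deltax$.
\end{itemize}
Hence the set of attainable values $\{x_{\deltax,\deltat}(t)\}$ over admissible signal perturbations equals the set $\{x_{\overline{\deltax},\deltat}(t)\}$ over admissible vector perturbations, for every fixed $\deltat$.

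\textbf{Forall versus infimum.} It follows that $(\Deltax,\Deltat)\in D^\mu(x,t)$ if and only if $h^\mu(x_{\overline{\deltax},\deltat}(t))\ge0$ for all $\|\overline{\deltax}\|\le\Deltax$ and all $\deltat\in[-\Deltat;\Deltat]^n$. This holds if and only if the infimum of these values is $\ge0$, that is, $g^\mu_{x,t}(\Deltax,\Deltat)\ge0$. The direction ``$\forall\Rightarrow\inf\ge0$'' is immediate, and ``$\inf\ge0\Rightarrow$ each term $\ge0$'' holds because every term is bounded below by the infimum.

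\textbf{Main obstacle.} The argument has no real difficulty, only bookkeeping at the boundary. The cases $\Deltax=\infty$ or $\Deltat=\infty$ must be handled with the convention $\inf\emptyset=+\infty$, where constraint sets are unbounded but nonempty. It should also be stated that the norm $\|\cdot\|$ used in $\norm{\overline{\deltax}}$ is the same vector norm as in the $\infty$-signal norm. One further point: $\Deltat\in\overline{\Z}_{\geq0}$ is implicitly a decision variable of \eqref{eq:MO_prob} restricted to that domain, and this should be noted so the two feasible sets live in the same ambient space.
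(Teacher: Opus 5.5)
Your proposal is correct and follows the same route as the paper's proof. Both show that $D^\mu(x,t)$ depends on $\deltax$ only through $\deltax(t)$, so the signal constraint $\norm{\deltax}_\infty\le\Deltax$ can be replaced by the vector constraint $\|\overline{\deltax}\|\le\Deltax$, and both then rewrite the universal quantifier as $g^\mu_{x,t}(\Deltax,\Deltat)\ge 0$. Your two-directional argument (restricting a signal to time $t$, and realizing a vector by a signal supported at $t$) makes explicit a step the paper's proof only sketches.
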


The function $g_{x,t}^\mu(\Deltax,\Deltat)$ in \eqref{eq:inner_program} captures the worst-case margin of satisfying the predicate $\mu$ at state $x$ and time $t$ under joint spatial and temporal perturbations of magnitudes $\Deltax$ and $\Deltat$. The significance of Theorem~\ref{thm:2} is that it reduces evaluating $(\Deltax,\Deltat)\in D^\mu(x,t)$, which requires analyzing functions $\deltax\colon\Z\to\R^n$ in the function space $\norm{\deltax}_\infty\leq\Deltax$, to satisfaction of the constraint $g_{x,t}^\mu(\Deltax,\Deltat)\ge 0$, which only requires  analyzing  spatial perturbations $\overline{\deltax}\in\R^n$ in the vector space $\|\overline{\deltax}\|\le \Deltax$, thereby significantly reducing the computational complexity.\footnote{It becomes evident here why we use the signal space $\norm{\deltax}_\infty= \sup_{t\in\Z} \|\deltax(t)\|\le \Deltax$ via the  $\infty$-signal norm. Other signal norms do not permit rewriting $D^\mu(x,t)$.} 
Lastly, since enlarging $\Deltax$ or $\Deltat$  can only decrease the worst-case margin, the function $g_{x,t}^\mu(\Deltax,\Deltat)$ is monotonically non-increasing in $(\Deltax,\Deltat)$.

% \begin{remark}[Pareto Front Characterization]
%     Since enlarging $\Deltax$ or $\Deltax$  can only decrease the worst-case margin, the function $g_{x,t}^\mu(\Deltax,\Deltat)$ is monotonically non-increasing in $(\Deltax,\Deltat)$. 
%     If the predicate function $h^\mu$ is continuous in $\overline{\deltax}$,
%     % If the predicate $\mu$ and the signal $x$ are continuous, then the function
%     % $g_{x,t}^\mu(\Deltax,\Deltat)$ is also continuous in
%     % $(\Deltax,\Deltat)$ \Lars{We have a problem here, $x$ is discrete-time and cannot really be continuous}. In this case, 
%     a feasible point
%     $(\Deltax,\Deltat)\in D^\mu(x,t)$ is Pareto optimal if and only if
%     $g_{x,t}^\mu(\Deltax,\Deltat)=0$, i.e., it lies on the boundary of
%     $D^\mu(x,t)$. Consequently, the Pareto optimal set of
%     \eqref{eq:MO_prob} coincides with the zero-level set of
%     $g_{x,t}^\mu(\Deltax,\Deltat)$.
% \end{remark}

To compute this Pareto optimal set in practice, we use the $\varepsilon$-constraint scalarization method discussed  in Section~\ref{sec:background_multi-objective}. We also remark that the multi-objective optimization problem in \eqref{eq:MO_prob} is convex only if the predicate constraint $h^\mu(x_{\overline{\deltax},\deltat}(t)) \geq 0$ is concave in $\deltat$ and $\overline{\deltax}$, which will not hold in general.

\paragraph{A) General Predicate Functions.}
Applying the $\varepsilon$-constraint method, as summarized in \eqref{eq:epsilon_constraint}, to the multi-objective optimization problem in \eqref{eq:MO_prob} yields a family of single-objective optimization problems  parametrized by $\varepsilon\in\N_{\geq0}$:
\begin{align*}
\begin{split}
    \max_{\Deltax,\Deltat}\;&  \Deltax,\\
    \text{s.t.}\; &
    g_{x,t}^\mu(\Deltax,\Deltat)\ge 0\\
    & \Deltat \geq \varepsilon.
\end{split}
\end{align*}
For general (e.g., non-convex) predicate functions $h^\mu$, this optimization problem is still a bilevel optimization problem since $g_{x,t}^\mu(\Deltax,\Deltat)$ is an optimization problem in itself. We therefore have to solve the above problem as a robust optimization problem. As $g_{x,t}^\mu(\Deltax,\Deltat)$ is monotonically non-increasing in $(\Deltax,\Deltat)$, we can reduce  the complexity and simply enforce $\Deltat=\varepsilon$ instead of the inequality $\Deltat\ge\varepsilon$ and iterate through all $\Deltat\in\overline{\Z}_{\geq0}$. Thereby, we are guaranteed to obtain all weakly Pareto optimal points \cite[Theorem 3.2.1]{miettinen1999nonlinear}. Since \eqref{eq:MO_prob} has only two objective functions, it is straightforward to recover the Pareto optimal points from these weakly Pareto optimal points (more info below).

Note that any solution procedure iterating through $\Deltat=\varepsilon$ inflates the set of admissible temporal disturbances $\deltat\in[-\Deltat;\Deltat]^n$ in each iteration. Indeed, in each iteration we have to solve the following robust optimization problem
\begin{align}
\begin{split}
    \mathrm{SR}(\deltat) := \max_{\Deltax}\;&  \Deltax,\\
    \text{s.t.}\; &
    \inf_{\norm{\overline{\deltax}}\leq\Deltax} h^\mu(x_{\overline{\deltax},\deltat}(t)) \geq 0
\end{split}\label{eq:SO_prog_general}
\end{align}
for each $\deltat\in[-\Deltat;\Deltat]^n$ and then take the minimum over all these $\mathrm{SR}(\deltat)$.
We summarize our implementation in Algorithm~\ref{alg:compute_STR_pred}, which returns the corresponding robust semantics $\rho^\mu(x,t)$ via its functional envelope $\Deltax(\Deltat)$. 
\begin{algorithm}
    \caption{Predicate-level $\rho^{\textcolor{black}\mu}(x,t)$ for general predicate functions}
    \label{alg:compute_STR_pred}
    \begin{algorithmic}[1]
        \Require Predicate function $h^\mu\colon\R^n\to\R$, signal $x\colon\Z\to\R^n$, time index $t\in\Z$
        \Ensure Envelope representation $\rho^{\mu}(x,t)=\{(\Deltax(\Deltat),\Deltat)\}_{\Deltat\in\N_{\geq0}}$
        \For{$\Deltat = 0,1,2,\ldots,\Deltat^{\max}$} \Comment{increasing temporal disturbance level}
            \State $\mathcal{T}_{\Deltat}\gets [-\Deltat;\Deltat]^n\setminus [-\Deltat+1;\Deltat-1]^n$ \Comment{temporal shell}
            \State $\mathrm{SR}^\star\gets
            +\infty$  
            \Comment{initialize temporary variable}
            \color{black!40!white}\ForAll{$\deltat \in \mathcal{T}_{\Deltat}$} \Comment{parallelizable}
                    \State $\mathrm{SR}\gets  \mathrm{SR}(\deltat)$ \Comment{solve robust program \eqref{eq:SO_prog_general}}
                    \If{$\mathrm{SR} < 0$}
                        \State 
                        \textbf{return} $\Deltax$ 
                        \Comment{not $\Deltat$-temporally robust}
                    \Else
                        \State $\mathrm{SR}^\star\gets \min\{\mathrm{SR}^\star,\,\mathrm{SR}\}$
                    \EndIf
            \EndFor
            \color{black}\If{$\Deltat = 0$}
                \State $\Deltax(\Deltat)\gets
                \mathrm{SR}^\star$
            \Else
            \State $\Deltax(\Deltat)\gets
                \min\{\mathrm{SR}^\star,\,\Deltax(\Deltat-1)\}$ \Comment{monotone envelope}
                
            \EndIf
        \EndFor
    \end{algorithmic}
\end{algorithm}
We note that Lines~2 and 10--13 reduce the complexity of a naive algorithm that would check all $\deltat\in[-\Deltat;\Deltat]^n$ in every iteration by instead only reasoning over the newly introduced temporal perturbations $[-\Deltat;\Deltat]^n\setminus [-\Deltat+1;\Deltat-1]^n$.
We also note  that $\Deltax(\cdotx)$ in Line~7 is always well defined, i.e.,  $\Delta_X(\Delta_T)$ is non-empty for at least $\Delta_T=0$ due to our assumption that $(x,t)\models \phi$. If one is interested in recovering the Pareto optimal points from the weakly Pareto optimal points, we can simply remove $\Deltax(\Deltat-1)$ if $\Deltax(\Deltat-1)=\Deltax(\Deltat)$ after Line 13.

\medskip

\paragraph{B) Signed Distance Predicate Functions.}

A computational bottleneck of  Algorithm~\ref{alg:compute_STR_pred} is in Line~5 where we have to solve the optimization problem \eqref{eq:SO_prog_general}. This raises the question under what circumstances  \eqref{eq:SO_prog_general} can be computed efficiently.
One popular type of predicate function that fits the bill is the signed distance function. Let us hence consider the case when the predicate function $h^\mu$ is given by the signed Euclidean distance function $\sd(\cdotx,S)$ from \eqref{eq:signed_distance_fcn}, where $S\subseteq\R^n$ is a closed set. Our approach is motivated by the next result.

\begin{lemma}\label{lem:neg_sdf_ball}
    Let $S\subseteq\R^n$ be closed and nonempty, and let $\sd(\cdot,S)$ denote the
    signed distance function as in~\eqref{eq:signed_distance_fcn}.
    Then for any $z\in\R^n$ and $\Delta\ge 0$,
    \begin{align}
    \inf_{\|\delta\|\le \Delta}\sd(z-\delta,S) &\ge \sd(z,S)-\Delta. \label{eq:inf_ineq}
    \end{align}
\end{lemma}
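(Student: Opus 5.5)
The plan is to show that the signed distance function $\sd(\cdot,S)$ is $1$-Lipschitz with respect to $\norm{\cdot}$. Inequality \eqref{eq:inf_ineq} then follows immediately. If $|\sd(y,S)-\sd(z,S)|\le\norm{y-z}$ for all $y,z\in\R^n$, then for every $\delta$ with $\norm{\delta}\le\Delta$ we get $\sd(z-\delta,S)\ge \sd(z,S)-\norm{\delta}\ge\sd(z,S)-\Delta$. Taking the infimum over the ball gives the claim. So the whole proof reduces to the Lipschitz property.

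To prove it, I first note that for $z\notin S$ we have $\inf_{z'\in\partial S}\norm{z-z'}=\inf_{z'\in S}\norm{z-z'}=:d_S(z)$. Indeed, any segment from $z$ to a point of $S$ meets $\partial S$ no later than it enters $S$. Similarly, for $z\in \operatorname{int} S$, the boundary distance equals $d_{S^c}(z)$, the distance to the complement (using the closure of $S^c$). Hence one can write $\sd(z,S)=d_S(z)-d_{\overline{S^c}}(z)$. At most one of the two terms is nonzero, and both vanish exactly on $\partial S$, which matches the third case of \eqref{eq:signed_distance_fcn}. Distance functions to nonempty sets are $1$-Lipschitz by the triangle inequality. (If $S=\R^n$, then $\partial S=\emptyset$; I would treat this degenerate case separately, where $\sd\equiv+\infty$ or the infimum convention makes the claim trivial.)

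The main obstacle is that a difference of two $1$-Lipschitz functions is only $2$-Lipschitz in general. So I would argue by cases on the positions of $y$ and $z$. If both lie on the same side (both in $S$, or both in $\overline{S^c}$), only one distance term is active and the bound is immediate. If $y\in S$ and $z\notin S$, the segment $[y,z]$ contains a boundary point $w$ by connectedness. Then $d_S(z)\le\norm{z-w}$ and $d_{\overline{S^c}}(y)\le\norm{y-w}$, so
\[
|\sd(z,S)-\sd(y,S)| = d_S(z)+d_{\overline{S^c}}(y)\le \norm{z-w}+\norm{w-y}=\norm{z-y},
\]
since $w$ lies on the segment. This collinearity step is the key point, and it holds for any norm. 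Points on $\partial S$ are covered by either case, which completes the Lipschitz argument and hence the lemma.
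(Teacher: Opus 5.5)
Your proposal is correct and follows the paper's route: establish that $\sd(\cdot,S)$ is $1$-Lipschitz, apply it with $z'=z-\delta$, and take the infimum over $\norm{\delta}\le\Delta$.

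The difference is in how the Lipschitz property is obtained. The paper does not prove it; it imports it as a corollary from the literature. You derive it directly from the decomposition $\sd(\cdot,S)=d_S-d_{\overline{S^c}}$. You also correctly note that this decomposition alone only yields a $2$-Lipschitz bound. You close that gap with the boundary point $w$ on the segment joining points on opposite sides of $\partial S$, which gives $\norm{z-w}+\norm{w-y}=\norm{z-y}$. Your version is self-contained and works verbatim for any norm, which fits the paper's convention that $\norm{\cdot}$ may be an arbitrary vector norm.

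You also flag the degenerate case $S=\R^n$, which the paper ignores. There is a small sign slip there. Every point lies in $S$, so the paper's definition gives $\sd\equiv-\infty$, not $+\infty$. The inequality still holds trivially as $-\infty\ge-\infty$, so the slip is harmless.
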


\begin{proof}

By \cite[Cor.~1]{balint2019operations}, every signed distance function is
$1$-Lipschitz, i.e., such that
\[
|\sd(z,S)-\sd(z',S)|\le \|z-z'\|\qquad \forall z,z'\in\R^n.
\]
If we fix $\delta$ and set $z':=z-\delta$, we obtain   $\sd(z-\delta,S)\ge \sd(z,S)-\|\delta\|$.
Taking $\inf_{\|\delta\|\le\Delta}$ over this inequality directly yields \eqref{eq:inf_ineq}.\qed

\end{proof}

In fact, it is possible to show that the bound in \eqref{eq:inf_ineq} is tight if the set $S$ is convex, i.e., for the lower bound \eqref{eq:inf_ineq} if follows that $\inf_{\norm{{\delta}}\leq\Delta} \sd(z-{\delta},S) = \sd(z,S) - \Delta$. We skip the rather elaborate proof here. 

We can now modify Algorithm~\ref{alg:compute_STR_pred} using Lemma~\ref{lem:neg_sdf_ball} to simplify computation. All we have to do is to change Line~5 to $\mathrm{SR}\gets  \sd(x_\deltat,S)$; the rest remains identical.

\subsection{Computing the Robust Semantics of Signal Temporal Logic}\label{sec:comp_overall}
Based on the predicate-level robust semantics $\rho^\mu(x,t)$ computed in the previous subsection, the robust semantics $\rho^\phi(x,t)$ of the specification $\phi$ can be computed by propagating the Pareto optimal sets through the specification's parse tree (recall Fig.~\ref{fig:spec_parsetree_fighterjet} for an example), by applying min/max set operations as summarized in Definition~\ref{def:robust_semantics}.
We provide a compact version including only two possible operators in Algorithm~\ref{alg:compute_STR_spec_brief} to provide some intuition to the reader, while we include a full version including all operators from Definition~\ref{def:robust_semantics} in Appendix~\ref{app:comp_overall}.

\begin{algorithm}
    \caption{Compute specification-level $\rho^\phi(x,t)$ (compact version)}
    \label{alg:compute_STR_spec_brief}
    \begin{algorithmic}[1]
    \Require STL-like specification $\phi$, signal $x\colon\Z\to\R^n$, time $t\in\Z$, $\Deltat^{\max}\in\N$
    \Ensure Envelope $\rho^\phi(x,t)=\{(\Deltax_\phi(\Deltat),\Deltat)\}_{\Deltat\in\N_{\geq0}}$
    
    \State $\mathcal{N}\gets \phi.\texttt{nodes\_postorder()}$ \Comment{bottom-up traversal}

    \ForAll{$v \in \mathcal{N}$} 
        \color{black!40!white}\If{$v$ is a predicate node with predicate $\mu_v$}
            \For{$t' \in t\oplus v.\texttt{horizon}$} \Comment{only where needed upstream}
                \State $\Deltax_v(\cdotx;t') \gets 
                \texttt{Algorithm\_\ref{alg:compute_STR_pred}}(\mu_v,x,t')$
            \EndFor
    
        \color{black}\ElsIf{$v$ is Boolean $\wedge$ with children $c_1,c_2$}
            \For{$t' \in t\oplus v.\texttt{horizon}$}
                \For{$\Deltat=0,\ldots,\Deltat^{\max}$}
                    \State $\Deltax_v(\Deltat;t') \gets 
                    \min\{\Deltax_{c_1}(\Deltat;t'),\,\Deltax_{c_2}(\Deltat;t')\}$
                \EndFor
            \EndFor
    
        \color{black!40!white}\ElsIf{$v$ is $\G_{[a;b]}$ with child $c$}
            \For{$t' \in t\oplus v.\texttt{horizon}$}
                \For{$\Deltat=0,\ldots,\Deltat^{\max}$}
                    \State $\Deltax_v(\Deltat;t') \gets 
                    \min\limits_{k\in[a;b]}\ \Deltax_{c}(\Deltat;t'+k)$
                \EndFor
            \EndFor
        \color{black}\EndIf $\vdots$
    \EndFor
    \State \Return $\Deltax^\star_{\phi}(\cdotx;t)$
    \end{algorithmic}
\end{algorithm}

\begin{remark}[Scalability]
    Our STR monitor mirrors standard STL robustness monitoring: scalar robustness values at the predicate level are replaced by Pareto front sets, which propagate through the same parse tree structure via min/max operators overloaded via downward closure theory (Definition~\ref{def:min_max}). The additional complexity over standard monitoring therefore arises from two sources only: (i) computing a Pareto front instead of a scalar at each predicate, and (ii) set-valued min/max operations instead of scalar ones. The latter is lightweight, as evidenced by the runtimes reported in our experiments in Section~\ref{sec:numerical_results}. Predicate-level complexity scales with $\Delta_T$ and signal dimension $n$, since the algorithm iterates over shifts in $[-\Delta_T, \Delta_T]^n$; per-shift cost depends on the predicate type, ranging from closed-form evaluation to solving a numerical optimization problem.
\end{remark}

\begin{remark}[Coupled Time Disturbances] Variables can be coupled by setting $\deltat_i = \deltat_j$ for some $i \neq j$, restricting the disturbance set to a lower-dimensional subspace of $[-\Deltat, \Deltat]^n$. This has two advantages as it (1) reduces computational complexity and (2) yields more truthful robustness estimates. We note that this topic was also discussed in Remark~2 of the paper \cite{lindemann2022temporal}.
\end{remark}

\section{Case Studies: F-16 Fighter Jet \& Robotaxi}\label{sec:numerical_results}

In this section, we provide two case studies that illustrate the advantages of spatiotemporal robustness reasoning. We implemented the presented algorithms in MATLAB and ran all experiments on an Apple M1 Pro MacBook.\footnote{Code available at \url{https://doi.org/10.24433/CO.5553407.v1} (interactive) and \url{https://doi.org/10.5281/zenodo.19843304} (archived).}
For signed-distance predicates, computations reduce entirely to min/max operations over Pareto fronts; no numerical solver is invoked, so results are exact up to floating-point arithmetic.

\subsection{Fighter Jet}
Recall the running example of the F-16 fighter jet with specification $\phi:=\phi_{\mathrm{avoid}} \wedge \phi_{\mathrm{threat}} \wedge \phi_{\mathrm{climb}}$.
We use Algorithm~\ref{alg:compute_STR_spec} to compute the robust semantics $\rho^\phi(x,0)$ and report alongside it the intermediate robust semantics $\rho^{\phi_{\mathrm{avoid}}}( x,0)$, $\rho^{\phi_{\mathrm{threat}}}( x,0)$, and $\rho^{\phi_{\mathrm{climb}}}( x,0)$ in Fig.~\ref{fig:F16_STR}.
The computation of $\rho^{\phi_{\mathrm{avoid}}}( x,0)$, $\rho^{\phi_{\mathrm{threat}}}( x,0)$, and $\rho^{\phi_{\mathrm{climb}}}( x,0)$ took 53.23, 2.91, and 13.62 seconds, respectively, and the computation of $\rho^\phi(x,0)$ an additional 0.0001 seconds.\footnote{For the avoidance specification ${\phi_{\mathrm{avoid}}}$, we truncate the horizon to $t\geq1500$.}
% Threat corridor: 2.912514 seconds
% Avoid obstacle: Took 53.227962 seconds. (truncating before $t=1500$)
% Climb: 13.617337 seconds (for below and above predicates) + 0.010985 seconds.
% Putting all together: 0.000097 seconds.
In contrast to classical robust semantics --- which would return a single scalar spatial robustness margin by fixing $\Deltat=0$ --- the set-valued envelope of the robust semantics reveal how admissible spatial perturbations degrade when temporal perturbations increase.

Notably, the subformula limiting the signal's spatial robustness changes as the temporal disturbance budget increases.
For $\Deltat<30$, the spatiotemporal robustness is dominated by the climb requirement $\phi_{\mathrm{climb}}$, indicating sensitivity to temporal misalignment during the altitude transition.
Beyond $\Deltat\geq30$, the limiting factor becomes $\phi_{\mathrm{threat}}$, reflecting a qualitative shift where temporal delays primarily threaten timely clearance of the threat corridor. This example illustrates how spatiotemporal robustness can disentangle which sub-specifications fail first and why, which is not possible through scalar robustness alone.

\begin{figure}
    \centering
    \includegraphics[width=1\linewidth, alt={Four spatiotemporal robustness curves for the fighter jet example: three for different sub-specifications and one for the overall specification. They are all monotonically decreasing and the overall curve is the minimum of the three.}]{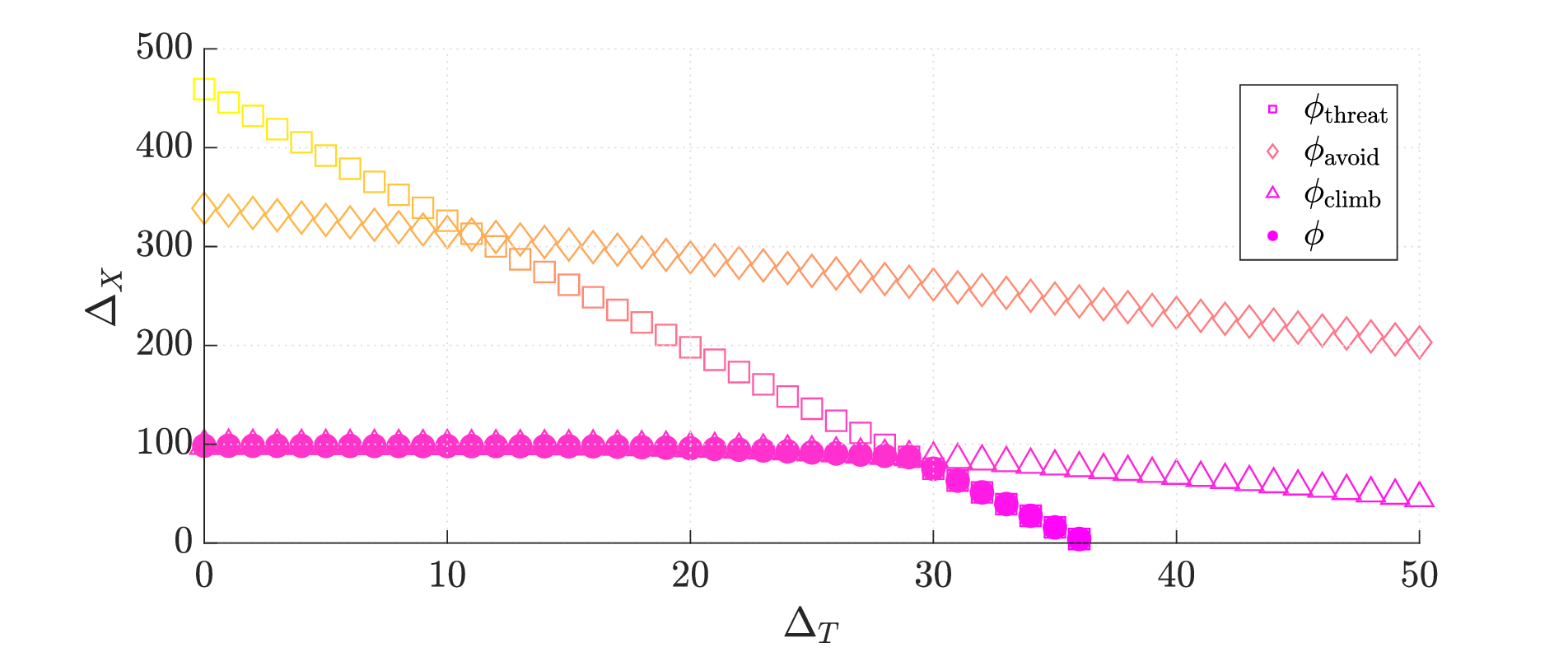}
    \caption{Spatiotemporal robustness of the fighter jet example.}
    \label{fig:F16_STR}
\end{figure}

\subsection{Robotaxi}
We now turn to a real-world dataset. In particular, we consider a motion scenario from the Waymo Open Dataset\footnote{Available at \url{https://waymo.com/open/}.} involving a vehicle passing a crossroad only moments before a pedestrian crosses the street (see Fig.~\ref{fig:waymo_setup}).
We wish to analyze \emph{how robust} the behavior of the signal $[x_v,x_p]\colon[0;90]\to\R^4$ is, where $x_v$ and $x_p$ encode the positions of the interacting vehicle and pedestrian, respectively. 

To this end, we first compute the STR with respect to a collision avoidance specification $\phi=\G_{[0;90]}\, (\norm{x_v-x_p}\geq d_{\mathrm{min}})$, where $d_{\mathrm{min}}=1$ is a minimum safety distance. This yields the (essentially) linear robust semantics curve in Fig.~\ref{fig:waymo_str}. This case is interesting as it indicates a linear relationship between spatial and temporal robustness, which we do in general not expect. Interestingly, it turns out that the slope of the robust semantics is not proportional to the relative velocity between vehicle and pedestrian, which one could suspect.

For another case, we fix the position of the pedestrian on the sidewalk (black star enclosed by $d_{\mathrm{min}}$ safety circumference in Fig.~\ref{fig:waymo_setup}) and compute the spatiotemporal robustness for the signal $x_v\colon[0;90]\to\R^2$. The resulting spatiotemporal robustness plot is also shown in Fig.~\ref{fig:waymo_str}, which is now nonlinear. The computational time for the experiments was 0.67 and 0.03 seconds, respectively.

\begin{figure}[t]
    \centering
    \begin{subfigure}[b]{.49\linewidth}
        \centering
        \includegraphics[width=\textwidth, alt={Setup for the robotaxi example. Dashed lines indicate the boundaries of the road and surrounding environment. Colored paths represent the trajectories of vehicles and pedestrians. A black star indicates the fixed position of the pedestrian for the second experiment.}]{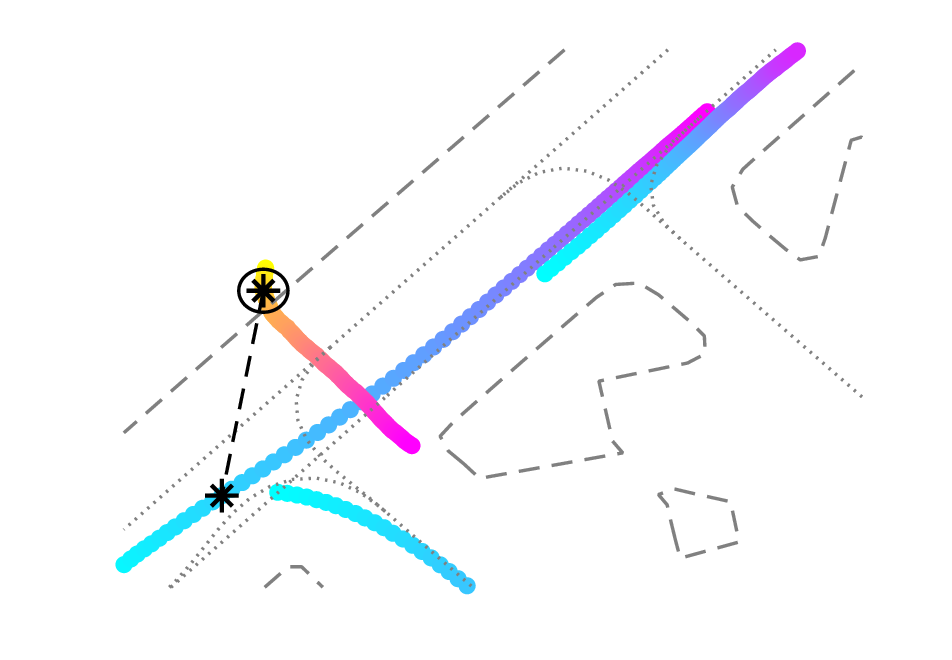}
        \caption{Paths of vehicles and pedestrians.}
        \label{fig:waymo_setup}
    \end{subfigure}
    \begin{subfigure}[b]{.49\linewidth}
        \centering
        \includegraphics[width=\textwidth, alt={Two spatiotemporal robustness curves for the robotaxi example. Both of them are monotonically decreasing. The upper one is for both vehicle and pedestrian, while the lower one is for the vehicle only. The lower curve is always below the upper one and decreases approximately linearly. The upper one plateaus for increasing temporal disturbances.}]{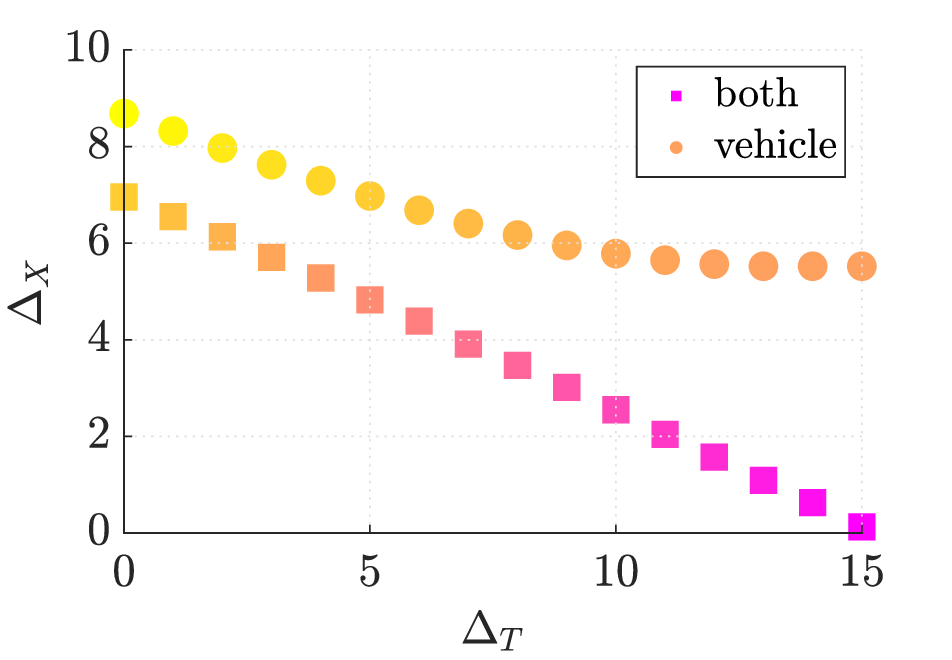}
        \caption{Spatiotemporal robustness.}
        \label{fig:waymo_str}
    \end{subfigure}
    \caption{Spatiotemporal robustness of the robotaxi example.}
    \label{fig:waymo}
\end{figure}

\section{Related Work}\label{sec:related_work}
Robustness and the ability to deal with uncertainties lies at the core of any properly engineered system. Existing literature on robust control \cite{doyle1988state,zhou1998essentials,freeman2008robust} and time-delay systems \cite{zhong2006robust,fridman2014introduction,niculescu2002delay} analyze and mitigate the effects of uncertainty, but focus solely on simple system specifications --- mainly stability and forward invariance --- without reasoning about high-level system objectives as required in many real-world applications. Indeed, metric interval temporal logic and signal temporal logic have emerged as powerful formalisms to capture rich specifications for continuous and hybrid dynamical systems \cite{koymans1990specifying,alur1996benefits,maler2004monitoring}. The task of monitoring such high-level system specifications plays an important role due to the increasing level of automation and complexity of autonomous systems \cite{luo2024sample,lindemann2024formal,seshia2022toward,kwiatkowska2023trust}. 

Robustness of temporal logic specifications was first studied in \cite{fainekos2009robustness,donze2010robust} via robust semantics that capture the geometric distance of a signal from unsatisfiability, thereby capturing all admissible spatial signal perturbations. Subsequently, these robust semantics were used for monitoring \cite{donze2013efficient,deshmukh2017robust,dokhanchi2014line,bortolussi2023conformal,lindemann2023conformal}, falsification/testing \cite{annpureddy2011s,abbas2013probabilistic,sankaranarayanan2012falsification}, and planning/control \cite{raman2014model,vasile2017sampling,pant2017smooth,lindemann2018control}. Many variants of the robust semantics have appeared with various desirable properties \cite{haghighi2019control,mehdipour2019arithmetic,mehdipour2024generalized,varnai2020robustness,gilpin2020smooth}, such as smoothness to make them more amenable to optimization \cite{dhonthi2021study}.

The aforementioned works capture  spatial robustness, but lack the ability to reason over temporal perturbations. The authors in \cite{donze2010robust,lindemann2022temporal} propose temporal robustness via robust semantics that capture permissible time shifts of a signal, while permissible time dilations were studied in \cite{rino2024efficiently}. Robust semantics for temporal robustness were used 
within robotics/control  \cite{rodionova2021time,rodionova2022combined,chen2023stl,le2024time,buyukkocak2025resilient,wang2024synthesis}. While these works consider time shifts of a signal, time dilations were studied in \cite{rino2024efficiently}. The only works that address spatiotemporal robustness are \cite{akazaki2015time,donze2010robust,niu2023robust}, but only provide scalar metrics. We note that \cite{donze2010robust} already hints at the multi-objective structure that we developed in this paper. Indeed, \cite{donze2010robust} argues that one could (1) fix a spatial robustness level and compute the resulting temporal robustness, and (2) then sweep over all such spatial robustness levels. The authors  call the resulting tuples ``a kind of Pareto curve''. We deliver precisely what \cite{donze2010robust} observed but left open: (1) we are the first to compute STR Pareto curves at the predicate level and propagate them through the parse tree (Definition~\ref{def:robust_semantics}), yielding sound under-approximations of the true STR (Theorem~\ref{thm:1}), which would be very difficult to compute directly (Definition~\ref{def:str}); (2) we precisely define min/max operations over Pareto curves (Definition~\ref{def:min_max}), which \cite{donze2010robust} omits and whose absence can yield misleading robustness interpretations; and (3) we prove the relationship connecting the robust semantics to the true STR (Theorem~\ref{thm:1}) and connect it to multi-objective optimization for efficient computation.

Lastly, we briefly discuss related lines of work that are tangential to our focus. Conformance testing frameworks quantify the similarity --- along spatial and temporal dimensions --- of the outputs of two systems directly, and then use such similarity metrics to transfer satisfaction guarantees of temporal logic tasks between systems \cite{abbas2014formal,deshmukh2015quantifying}. Further related are works on mining parametric temporal logic formulas from data, where it was noted that the parameter mining problem can be converted into a multi-objective optimization problem~\cite{hoxha2018mining,vazquez2018time,kyriakis2019specification}.

\section{Conclusion and Future Work}\label{sec:conclusion}
We introduced spatiotemporal robustness as a principled framework for reasoning about the satisfaction of temporal logic tasks, evaluated over discrete-time signals, under joint spatial and temporal perturbations. We further developed robust semantics that under-approximate spatiotemporal robustness, along with tractable monitoring algorithms that use ideas from multi-objective optimization. This work lays the foundations for future advances in the verification and design of safety-critical autonomous systems under multi-dimensional uncertainty. For future work, we aim to incorporate dynamical systems properties into the definition and computation of spatiotemporal robustness. We also intend to focus on algorithmic improvements for computing the robust semantics, so they can be used in downstream tasks such as verification and control synthesis.

\begin{credits}
    \subsubsection*{\discintname}
    The authors have no competing interests to declare that are relevant to the content of this article.

    \subsubsection*{Data-Availability Statement.}
    The artifact (incl. data and code) that supports the experimental findings of this paper is openly available at \url{https://doi.org/10.5281/zenodo.19843304}.
\end{credits}

%
% ---- Bibliography ----
%
% BibTeX users should specify bibliography style 'splncs04'.
% References will then be sorted and formatted in the correct style.
%
 \bibliographystyle{splncs04}
 \bibliography{bib}
 
 \appendix

\section{Robust Semantics for Spatial Robustness}\label{app:robust_semantics}

For the convenience of the reader, we recall the robust semantics for spatial robustness from \cite{fainekos2009robustness} which will structurally resemble the robust semantics for spatiotemporal robustness.
\begin{definition}[STL Robust Semantics]\label{def:robust_semantics_}
	Let $x\colon\Z\to \R^n$ be a signal, $\phi$ be an STL specification, and $t\in\Z$ be a time. The \emph{semantics} of $\phi$ are recursively defined over its structure as 
	\begin{align*}
		\rho^{\top}({x},t)& := \infty,\\
		\rho^{\mu}({x},t)& := \sd(x(t),O^\mu),\\
		\rho^{\phi' \wedge \phi''}({x},t) &:= 	\min\left\lbrace \rho^{\phi'}({x},t),\, \rho^{\phi''}({x},t)\right\rbrace,\\
        \rho^{\phi' \vee \phi''}({x},t) &:= 	\max\left\lbrace \rho^{\phi'}({x},t),\; \rho^{\phi''}({x},t)\right\rbrace\\
        \rho^{G_I \phi'}({x},t) &:= \inf_{t'\in t\oplus I}  \rho^{\phi'}({x},t')\\
        \rho^{F_I \phi'}({x},t) &:= \sup_{t'\in t\oplus I}  \rho^{\phi'}({x},t')\\
		\rho^{\phi' \U_I \phi''}({x},t) &:= \sup_{t''\in t\oplus I}  \min\left\lbrace \rho^{\phi''}({x},t''),\, \inf_{t'\in [t;t'']}\rho^{\phi'}({x},t') \right\rbrace.
	\end{align*}
	where we define the set of states that satisfy the predicate $\mu$ as
	\begin{align*}
		O^\mu:=\left\lbrace x\in\R^n \mid h^\mu(x)\ge 0\right\rbrace
	\end{align*}
	and where the signed distance $\sd(\cdot)$ is defined as in equation \eqref{eq:signed_distance_fcn}. 
\end{definition}

The robust semantics are easier to compute and provide an underapproximation of the spatial robustness, i.e., we have that $|\rho^{\phi}({x},t)|\le |\mathcal{SR}^\phi({x},t)|$.

\section{Proof of Theorem~\ref{thm:1}}\label{app:proof_robust_semantics_lowerbound_str}
\begin{proof}
        Our proof strategy is to show that $D^\phi(x,t)\supseteq \rho^{\phi}({x},t)$. Once we have established this fact, it is trivial to conclude that $\mathcal{STR}^\phi_\downarrow({x},t) \supseteq \rho^{\phi}_\downarrow({x},t)$ since $\mathcal{STR}^\phi({x},t)=\max(D^\phi(x,t))$. We proceed recursively by structural induction over the syntax of STL specifications from \eqref{eq:full_STL}, excluding negation operators.

        \smallskip
        
        \emph{True $\top$: } For the Boolean truth value $\top$, it is straightforward to compute the domain   $D^\top(x,t)=\overline{\R}_{\geq0}\times \overline{\Z}_{\geq0}=[0,\infty]\times \{0,\hdots,\infty\}$. By definition, the robust semantics are $\rho^{\top}({x},t)=\{(\infty,\infty)\}$, so that it follows that $D^\top(x,t)\supseteq \rho^{\top}({x},t)$.

        \smallskip
        
        \emph{Predicate $\mu$: } For predicates $\mu$, it is trivial to see that $D^\mu(x,t)\supseteq \rho^{\mu}({x},t)$ by the definition of $\rho^{\mu}({x},t)$, which is $\rho^{\mu}({x},t)=\mathcal{STR}^{\mu}({x},t)=\max(D^\mu(x,t))$. 
        
        \smallskip
        
        \emph{Conjunctions $\phi'\wedge \phi''$: } For conjunctions $\phi'\wedge \phi''$, we assume by induction that $D^{\phi'}({x},t) \supseteq \rho^{\phi'}({x},t)$ and $D^{\phi''}({x},t) \supseteq \rho^{\phi''}({x},t)$.  We note that the set $D^{\phi'\wedge \phi''}(x,t)$ can  be  decomposed and written as 
        \begin{align*}
    D^{\phi'\wedge \phi''}\!(x,t)&=\Big\lbrace (\Deltax,\Deltat)\in\overline{\R}_{\geq0}\!\times\! \overline{\Z}_{\geq0} \;\Big\vert\; (x_{\deltax,\deltat},t) \models \phi' \wedge \phi'',\;
    \forall\norm{\deltax}_\infty\leq\Deltax,\Big.\\
    &\quad\Big.\forall\deltat\in[-\Deltat;\Deltat]^n  \Big\rbrace,\\
    & =\Big\lbrace (\Deltax,\Deltat)\in\overline{\R}_{\geq0}\times \overline{\Z}_{\geq0} \;\Big\vert\;  (x_{\deltax,\deltat},t) \models \phi',\;
    \forall\norm{\deltax}_\infty\leq\Deltax,\Big.\\
    &\quad\Big.\forall\deltat\in[-\Deltat;\Deltat]^n   \Big\rbrace \cap \Big\lbrace (\Deltax,\Deltat)\in\overline{\R}_{\geq0}\!\times\! \overline{\Z}_{\geq0} \!\;\Big\vert\;\! (x_{\deltax,\deltat},t) \models \phi'',\Big.\\
    &\quad\Big.
    \forall\norm{\deltax}_\infty\leq\Deltax,\forall\deltat\in[-\Deltat;\Deltat]^n  \Big\rbrace,\\
    &=D^{\phi'}(x,t) \cap D^{\phi''}(x,t).
\end{align*}
The robust semantics $\rho^{\phi'\wedge \phi''}({x},t)$ from Definition~\ref{def:robust_semantics} can further be written as
\begin{align*}
    \rho^{\phi'\wedge \phi''}({x},t)&=\min\left\lbrace\rho^{\phi'}({x},t),\,\rho^{\phi''}({x},t)\right\rbrace=\max\left\lbrace\rho^{\phi'}_\downarrow({x},t)\cap \rho^{\phi''}_\downarrow({x},t)\right\rbrace,
\end{align*}
where we used Definition~\ref{def:min_max} for the minimum operation of $\rho^{\phi'}({x},t)$ and $\rho^{\phi''}({x},t)$. 

By the definition of $D^{\phi}({x},t)$, we have that $D^{\phi}({x},t)=D^{\phi}_\downarrow({x},t)$.\footnote{If  $(\Deltax,\Deltat)\in D^\phi(x,t)$, then any  $(\Deltax',\Deltat')\in\overline{\R}_{\geq0}\times\overline{\Z}_{\geq0}$ with $(\Deltax,\Deltat)\succcurlyeq (\Deltax',\Deltat')$ is  such that $(\Deltax',\Deltat')\in D^\phi(x,t)$, i.e., $D^{\phi}({x},t)$ is downward closed and $D^{\phi}({x},t)=D^{\phi}_\downarrow({x},t)$. This follows by definition of $D^{\phi}({x},t)$ and since $\Deltax' \le \Deltax$ and $\Deltat'\le \Deltat$. }  Therefore, our induction assumption $D^{\phi'}({x},t) \supseteq \rho^{\phi'}({x},t)$ and $D^{\phi''}({x},t) \supseteq \rho^{\phi''}({x},t)$  implies that $D^{\phi'}({x},t) \supseteq \rho^{\phi'}_\downarrow({x},t)$ and $D^{\phi''}({x},t) \supseteq \rho^{\phi''}_\downarrow({x},t)$. We hence know that $D^{\phi'}(x,t) \cap D^{\phi''}(x,t)\supseteq \rho^{\phi'}_\downarrow({x},t)\cap \rho^{\phi''}_\downarrow({x},t)$, by which it follows that
\begin{align*}
    D^{\phi'\wedge \phi''}(x,t)=D^{\phi'}(x,t) \cap D^{\phi''}(x,t)\supseteq \max(\rho^{\phi'}_\downarrow({x},t)\cap \rho^{\phi''}_\downarrow({x},t))=\rho^{\phi'\wedge \phi''}({x},t).
\end{align*}

        \smallskip
        
\emph{Disjunctions $\phi'\vee \phi''$: } The proof for disjunctions $\phi'\vee \phi''$ proceeds similarly as for conjunctions. We assume by induction that $D^{\phi'}({x},t) \supseteq \rho^{\phi'}({x},t)$ and $D^{\phi''}({x},t) \supseteq \rho^{\phi''}({x},t)$.  We note that the set $D^{\phi'\vee \phi''}(x,t)$ can  be  written as 
        \begin{align*}
    D^{\phi'\vee \phi''}\!(x,t)&=\Big\lbrace (\Deltax,\Deltat)\in\overline{\R}_{\geq0}\!\times\! \overline{\Z}_{\geq0} \;\Big\vert\; (x_{\deltax,\deltat},t) \models \phi' \vee \phi'',\;
    \forall\norm{\deltax}_\infty\leq\Deltax,\Big.\\
    &\quad\Big.\forall\deltat\in[-\Deltat;\Deltat]^n  \Big\rbrace,\\
    &{\supseteq}\Big\lbrace (\Deltax,\Deltat)\in\overline{\R}_{\geq0}\!\times\! \overline{\Z}_{\geq0} \!\;\Big\vert\;\! (x_{\deltax,\deltat},t) \models \phi',\;
    \forall\norm{\deltax}_\infty\leq\Deltax,\Big.\\
    &\quad\Big.\forall\deltat\in[-\Deltat;\Deltat]^n   \Big\rbrace \cup \Big\lbrace (\Deltax,\Deltat)\in\overline{\R}_{\geq0}\!\times\! \overline{\Z}_{\geq0} \!\;\Big\vert\;\! (x_{\deltax,\deltat},t) \models \phi'',\Big.\\
    &\quad\Big.
    \forall\norm{\deltax}_\infty\leq\Deltax,\,\forall\deltat\in[-\Deltat;\Deltat]^n  \Big\rbrace\\
    &=D^{\phi'}(x,t) \cup D^{\phi''}(x,t).
\end{align*}
The robust semantics $\rho^{\phi'\vee \phi''}({x},t)$ from Definition~\ref{def:robust_semantics} can further be written as
\begin{align*}
    \rho^{\phi'\vee \phi''}({x},t)&=\max\left\lbrace\rho^{\phi'}({x},t),\,\rho^{\phi''}({x},t)\right\rbrace=\max\left\lbrace\rho^{\phi'}({x},t)\cup \rho^{\phi''}({x},t)\right\rbrace,
\end{align*}
where we used Definition~\ref{def:min_max} for the maximum operation of $\rho^{\phi'}({x},t)$ and $\rho^{\phi''}({x},t)$. 

Our induction assumption $D^{\phi'}({x},t) \supseteq \rho^{\phi'}({x},t)$ and $D^{\phi''}({x},t) \supseteq \rho^{\phi''}({x},t)$  implies that $D^{\phi'}(x,t) \cup D^{\phi''}(x,t)\supseteq \rho^{\phi'}({x},t)\cup \rho^{\phi''}({x},t)$, by which it follows that
\begin{align*}
    D^{\phi'\vee \phi''}(x,t){\supseteq}D^{\phi'}(x,t) \cup D^{\phi''}(x,t)\supseteq \max(\rho^{\phi'}({x},t)\cup \rho^{\phi''}({x},t))=\rho^{\phi'\vee \phi''}({x},t).
\end{align*}
    
    \emph{Always $\G_I \phi'$}: For always $\G_I \phi'$, we recall the semantics $({x},t) \models  \G_I \phi'$ to be $\forall t' \in t\oplus I,\, ({x},t')\models \phi'$, while the robust semantics are $\rho^{G_I \phi'}({x},t) := \min_{t'\in t\oplus I}  \rho^{\phi'}({x},t')$. This corresponds to the case of $|t\oplus I|-1$ conjunction operators so that $D^{\G_I \phi'}(x,t)\supseteq \rho^{\G_I \phi'}({x},t)$ follows immediately.
     \smallskip

         \emph{Eventually $\F_I \phi'$ and until $\phi' \U_I \phi''$}: For eventually $F_I \phi'$  and until $\phi' \U_I \phi''$, we make a similar observation as for always $\G_I \phi'$, i.e., that it corresponds to the application of a countable number of conjunction and disjunction operators. Therefore, it follows that $D^{\F_I \phi'}(x,t)\supseteq \rho^{\F_I \phi'}({x},t)$ and that $D^{\phi' \U_I \phi''}(x,t)\supseteq \rho^{\phi' \U_I \phi''}({x},t)$, which concludes the proof.\qed
\end{proof}

\section{Proof of Theorem \ref{thm:2}}
\label{app:proof_equivalent_representation}
\begin{proof}
As previously mentioned, it is easy to see that the robust semantics $\rho^\mu(x,t)$ are equivalent to the Pareto optimal points of the multi-objective optimization problem with decision variables $\theta:=[\Deltax,\Deltat]$, feasible set $\Theta:=D^\mu(x,t)$, and objective functions $f_1(\theta):=\Deltax$ and $f_2(\theta):=\Deltat$, i.e., that     
\begin{subequations}
\begin{align}
    \rho^\mu(x,t)= \max_{\Deltax,\Deltat}\;&  (\Deltax,\Deltat), \\
\text{s.t.}\; &
  (\Deltax,\Deltat)\in  D^\mu(x,t). \label{eq:MO_predicate_domain}
\end{align}\label{eq:MO_prob___}%
\end{subequations} 
Our proof proceeds by showing an equivalent formulation of the constraint \eqref{eq:MO_predicate_domain}. We first note that the domain $D^\mu(x,t)$ can be re-written as
\begin{align*}
    D^\mu(x,t)\!
    &\!\overset{(a)}{=}\!\!\!\left\lbrace \!(\Deltax\!,\!\Deltat)\!\in\!\overline{\R}_{\geq0}\!\!\times\! \overline{\Z}_{\geq0} \!\!\;\Big\vert\;\!\! (x_{\deltax\!,\deltat},t) \!\models\! \mu,\;
    \!\!\forall\norm{\deltax\!}_\infty\!\!\leq\!\Deltax,\deltat\!\in\![-\Deltat;\!\Deltat]^n \! \right\rbrace\!,\\
    &\!\overset{(b)}{=}\!\!\!\left\lbrace\! (\Deltax\!,\!\Deltat)\!\in\!\overline{\R}_{\geq0}\!\!\times\! \overline{\Z}_{\geq0} \!\!\;\Big\vert\;\!\! (x_{\overline{\deltax}\!,\deltat},t) \!\models\! \mu,\;
    \!\!\forall\|\overline{\deltax}\|\!\!\leq\!\Deltax,\deltat\!\in\![-\Deltat;\!\Deltat]^n   \!\right\rbrace\!,
\end{align*}
where we have written $x_{\overline{\deltax},\deltat}(t)$ with a slight abuse of notation as $x_{\overline{\deltax},\deltat}(t)=[x_1(t-\deltati{1}),\hdots,x_n(t-\deltati{n})]+\overline{\deltax}$. The equality in $(a)$ follows by definition of the domain $D^\mu(x,t)$. The equality in $(b)$ follows for the following reasons. Recall that $(x_{\deltax\!,\deltat},t) \!\models\! \mu$ is defined as $h^\mu(x_{\deltax\!,\deltat}(t))\geq0$. By definition, we also know that $x_{\deltax\!,\deltat}(t) = x_{\deltat}(t) + \deltax(t)$, meaning that $\deltax(t')$ for $t'\neq t$ does not appear in the definition of $D^\mu(x,t)$. We can hence interpret $\overline{\deltax}$ as $\deltax(t)$  and replace $\forall\norm{\deltax\!}_\infty\!\!\leq\!\Deltax$ by $\forall\|\overline{\deltax}\|\!\!\leq\!\Deltax$, from which the equality in $(b)$ follows.

We can now further re-write the domain $D^\mu(x,t)$ as
\begin{equation*}
    D^\mu(x,t) = \left\lbrace (\Deltax,\Deltat)\in\overline{\R}_{\geq0}\times \overline{\Z}_{\geq0} \;\Big\vert\; g^\mu_{x,t}(\Deltax,\Deltat)\geq0  \right\rbrace,
\end{equation*} 
where we introduced the robust predicate margin
\begin{equation*}
    g_{x,t}^\mu(\Deltax,\Deltat) := \inf_{\substack{\norm{\overline{\deltax}}\leq\Deltax\\\deltat\in[-\Deltat;\Deltat]^n}} 
    h^\mu \!\left(\begin{bmatrix}
        x_1(t-\deltati{1})\\\vdots\\x_n(t-\deltati{n})
    \end{bmatrix} + \overline{\deltax}\right).%\label{eq:inner_program}.
\end{equation*}

With this observation in mind, we can re-write the multi-objective optimization problem in equation \eqref{eq:MO_prob___} as the multi-objective optimization problem
\begin{subequations}
\begin{align*}
    \rho^\mu(x,t)= \max_{\Deltax,\Deltat}\;&  (\Deltax,\Deltat),%\label{eq:MO_prob_objfcn} 
    \\
\text{s.t.}\; &
  g_{x,t}^\mu(\Deltax,\Deltat)\ge 0,%\label{eq:MO_prob_constr}
\end{align*}%\label{eq:MO_prob}%
\end{subequations}
which presents an alternative representation for $\rho^\mu(x,t)$ and completes the proof.~\qed
\end{proof}

\section{Algorithm for Computing the Robust Semantics}\label{app:comp_overall}
As outlined previously, once the spatiotemporal robustness of the individual predicates is computed, the spatiotemporal robustness of the overarching specification can be derived through recursively applying min and max operations, encoding the robust semantics in Definition~\ref{def:robust_semantics}.
Algorithm \ref{alg:compute_STR_spec} computes the spatiotemporal robustness envelope
$\Deltax_\phi(\Deltat;t)$ by evaluating the specification $\phi$ bottom--up on its parse tree; see Figure~\ref{fig:spec_parsetree_fighterjet} for an example specification parse tree consisting of three subformulas.

% \paragraph{Algorithm summary}
Line~1 performs a postorder traversal of the specification's parse tree, ensuring that each node $v \in \mathcal{N}$ is processed only after all of its children. This enforces a dynamic-programming structure: every spatiotemporal robustness envelope computed at a node can be reused by its ancestors without recomputation. The remaining lines encode the robust semantics from Definition~\ref{def:robust_semantics}.

Lines~3--5 handle predicate nodes. For each predicate $\mu_v$, the algorithm iterates over the time indices $t'$ that may influence the robustness at the query time $t$, as determined by the node horizon $v.\texttt{horizon}$ (Line~4). At each such $t'$, the predicate-level robustness envelope $\Deltax_v(\cdotx;t')$ is computed once using the appropriate predicate-level algorithm (Line~5; e.g., Algorithm~\ref{alg:compute_STR_pred}). These predicate nodes constitute the leaves of the robustness computation (cf. Fig.~\ref{fig:spec_parsetree_fighterjet}).

Lines~6--9 implement Boolean conjunction. For every relevant time $t'$ (Line~7) and temporal disturbance level $\Deltat$ (Line~8), the robustness envelope of a conjunction is obtained by the pointwise minimum of the child envelopes (Line~9). This mirrors the standard robust semantics $\rho^{\phi_1\wedge\phi_2}=\min\{\rho^{\phi_1},\rho^{\phi_2}\}$, yielding the lower envelope (Definition~\ref{def:min_max}).

Lines~10--13 implement Boolean disjunction analogously. The envelope at the parent node is the pointwise maximum of the child envelopes (Line~13), corresponding to the upper envelope $\rho^{\phi_1\vee\phi_2}=\max\{\rho^{\phi_1},\rho^{\phi_2}\}$.

Lines~14--17 implement the always operator $\G_{[a;b]}$. For each $(t',\Deltat)$, the parent envelope is the minimum over all time-shifted child envelopes on the interval $[a;b]$ (Line~17). This realizes the robust semantics of $\G$ as a temporal minimum, while preserving the monotonicity of the envelope in $\Deltat$.

Lines~18--21 implement the eventually operator $\F_{[a;b]}$. Here, the envelope is obtained by taking the maximum over the time-shifted child envelopes (Line~21), corresponding to the robust semantics of $\F$ as a temporal maximum.

Lines~22--30 implement the until operator $\U_{[a;b]}$ using an efficient single-pass dynamic program. For a fixed $(t',\Deltat)$, the algorithm sweeps the interval $k=a,\ldots,b$ once. Line~28 maintains a running prefix minimum $m=\min_{k\in[a;b]}\Deltax_{c_\ell}(\allowbreak\Deltat;t'+k)$ of the left subformula. Line~29 updates the best achievable value $\best=\max_{k\in[a,b]}\min\{m,\,\Deltax_{c_r}(\Deltat;t'+k)\}$. This directly implements the robust semantics of the until operator (see Definition~\ref{def:robust_semantics}), but avoids the quadratic nested minimization by using prefix minima.

Finally, Line~31 returns the envelope at the root node, yielding $\Deltax_\phi(\cdotx;t)$, the  robustness envelope of the full specification at the query time $t$.

\begin{algorithm}
\caption{Compute specification-level $\mathcal{STR}^\phi(x,t)$ (full version)}
\label{alg:compute_STR_spec}
\begin{algorithmic}[1]
\Require STL-like specification $\phi$, signal $x\colon\Z\to\R^n$, time $t\in\Z$, $\Deltat^{\max}\in\N$
\Ensure Envelope $\mathrm{STR}^\phi_{x,t}=\{(\Deltax_\phi(\Deltat),\Deltat)\}_{\Deltat\in\N_{\geq0}}$

\State $\mathcal{N}\gets \phi.\texttt{nodes\_postorder()}$ \Comment{bottom-up traversal}

\ForAll{$v \in \mathcal{N}$} 
    \color{black!40!white}\If{$v$ is a predicate node with predicate $\mu_v$}
        \For{$t' \in t\oplus v.\texttt{horizon}$} \Comment{only where needed upstream}
            \State $\Deltax_v(\cdotx;t') \gets 
            \texttt{Algorithm\_\ref{alg:compute_STR_pred}}(\mu_v,x,t')$
        \EndFor

    \color{black}\ElsIf{$v$ is Boolean $\wedge$ with children $c_1,c_2$}
        \For{$t' \in t\oplus v.\texttt{horizon}$}
            \For{$\Deltat=0,\ldots,\Deltat^{\max}$}
                \State $\Deltax_v(\Deltat;t') \gets 
                \min\{\Deltax_{c_1}(\Deltat;t'),\,\Deltax_{c_2}(\Deltat;t')\}$
            \EndFor
        \EndFor

    \color{black!40!white}\ElsIf{$v$ is Boolean $\vee$ with children $c_1,c_2$}
        \For{$t' \in t\oplus v.\texttt{horizon}$}
            \For{$\Deltat=0,\ldots,\Deltat^{\max}$}
                \State $\Deltax_v(\Deltat;t') \gets 
                \max\{\Deltax_{c_1}(\Deltat;t'),\,\Deltax_{c_2}(\Deltat;t')\}$
            \EndFor
        \EndFor

    \color{black}\ElsIf{$v$ is $\G_{[a;b]}$ with child $c$}
        \For{$t' \in t\oplus v.\texttt{horizon}$}
            \For{$\Deltat=0,\ldots,\Deltat^{\max}$}
                \State $\Deltax_v(\Deltat;t') \gets 
                \min\limits_{k\in[a;b]}\ \Deltax_{c}(\Deltat;t'+k)$
            \EndFor
        \EndFor

    \color{black!40!white}\ElsIf{$v$ is $\F_{[a;b]}$ with child $c$}
        \For{$t' \in t\oplus v.\texttt{horizon}$}
            \For{$\Deltat=0,\ldots,\Deltat^{\max}$}
                \State $\Deltax_v(\Deltat;t') \gets 
                \max\limits_{k\in[a;b]}\ \Deltax_{c}(\Deltat;t'+k)$
            \EndFor
        \EndFor

    \color{black}\ElsIf{$v$ is $\ \U_{[a;b]}$ with children $c_\ell$ (left), $c_r$ (right)}
        \For{$t' \in t\oplus v.\texttt{horizon}$}
            \For{$\Deltat=0,\ldots,\Deltat^{\max}$} \Comment{efficient prefix-minimum DP for each $\Deltat$}
                \State $m \gets +\infty$ \Comment{running prefix minimum of left child}
                \State $\best \gets -\infty$
                \For{$k=0,\ldots,b$}
                    \State $m \gets \min\{m,\,\Deltax_{c_\ell}(\Deltat;t'+k)\}$
                    \If{$k \ge a$}
                      \State $\best \gets \max\{\best,\,\min\{m,\,\Deltax_{c_r}(\Deltat;t'+k)\}\}$
                    \EndIf
                \EndFor
                \State $\Deltax_v(\Deltat;t') \gets \best$
            \EndFor
        \EndFor
    \EndIf
\color{black}\EndFor

\State \Return $\Deltax_{\phi}(\cdotx;t)$
\end{algorithmic}
\end{algorithm}

\end{document}

%% file: Figures/graph.pdf_tex
%% Creator: Inkscape 1.3.2 (091e20e, 2023-11-25), www.inkscape.org
%% PDF/EPS/PS + LaTeX output extension by Johan Engelen, 2010
%% Accompanies image file 'graph.pdf' (pdf, eps, ps)
%%
%% To include the image in your LaTeX document, write
%%   \input{<filename>.pdf_tex}
%%  instead of
%%   \includegraphics{<filename>.pdf}
%% To scale the image, write
%%   \def\svgwidth{<desired width>}
%%   \input{<filename>.pdf_tex}
%%  instead of
%%   \includegraphics[width=<desired width>]{<filename>.pdf}
%%
%% Images with a different path to the parent latex file can
%% be accessed with the `import' package (which may need to be
%% installed) using
%%   \usepackage{import}
%% in the preamble, and then including the image with
%%   \import{<path to file>}{<filename>.pdf_tex}
%% Alternatively, one can specify
%%   \graphicspath{{<path to file>/}}
%% 
%% For more information, please see info/svg-inkscape on CTAN:
%%   http://tug.ctan.org/tex-archive/info/svg-inkscape
%%
\begingroup%
  \makeatletter%
  \providecommand\color[2][]{%
    \errmessage{(Inkscape) Color is used for the text in Inkscape, but the package 'color.sty' is not loaded}%
    \renewcommand\color[2][]{}%
  }%
  \providecommand\transparent[1]{%
    \errmessage{(Inkscape) Transparency is used (non-zero) for the text in Inkscape, but the package 'transparent.sty' is not loaded}%
    \renewcommand\transparent[1]{}%
  }%
  \providecommand\rotatebox[2]{#2}%
  \newcommand*\fsize{\dimexpr\f@size pt\relax}%
  \newcommand*\lineheight[1]{\fontsize{\fsize}{#1\fsize}\selectfont}%
  \ifx\svgwidth\undefined%
    \setlength{\unitlength}{313bp}%
    \ifx\svgscale\undefined%
      \relax%
    \else%
      \setlength{\unitlength}{\unitlength * \real{\svgscale}}%
    \fi%
  \else%
    \setlength{\unitlength}{\svgwidth}%
  \fi%
  \global\let\svgwidth\undefined%
  \global\let\svgscale\undefined%
  \makeatother%
  \begin{picture}(1,0.61980831)%
    \lineheight{1}%
    \setlength\tabcolsep{0pt}%
    \put(0,0){\includegraphics[width=\unitlength,page=1]{graph.pdf}}%
    \put(0.0798722,0.61501597){\color[rgb]{0,0,0}\makebox(0,0)[t]{\smash{\begin{tabular}[t]{c}$\Delta_X$\end{tabular}}}}%
    \put(0.02164506,0.30397304){\color[rgb]{0,0,0}\makebox(0,0)[t]{\smash{\begin{tabular}[t]{c}$\color{myorange}\Delta_X''$\end{tabular}}}}%
    \put(0.02673691,0.14570145){\color[rgb]{0,0,0}\makebox(0,0)[t]{\smash{\begin{tabular}[t]{c}$\color{myred}\Delta_X'$\end{tabular}}}}%
    \put(0.16791102,0.0169998){\color[rgb]{0,0,0}\makebox(0,0)[t]{\smash{\begin{tabular}[t]{c}$\color{myorange}\Delta_T''$\end{tabular}}}}%
    \put(0.349779,0.0169998){\color[rgb]{0,0,0}\makebox(0,0)[t]{\smash{\begin{tabular}[t]{c}$\color{myred}\Delta_T'$\end{tabular}}}}%
    \put(0.7498773,0.06257176){\color[rgb]{0,0,0}\makebox(0,0)[t]{\smash{\begin{tabular}[t]{c}$\Delta_T$\end{tabular}}}}%
    \put(0.89120259,0.21633677){\color[rgb]{0,0,0}\makebox(0,0)[t]{\smash{\begin{tabular}[t]{c}Temporal robustness\end{tabular}}}}%
    \put(0.82776893,0.4664537){\color[rgb]{0,0,0}\makebox(0,0)[t]{\smash{\begin{tabular}[t]{c}Spatiotemporal\\robustness\end{tabular}}}}%
    \put(0.49102496,0.58669257){\color[rgb]{0,0,0}\makebox(0,0)[t]{\smash{\begin{tabular}[t]{c}Spatial robustness\end{tabular}}}}%
  \end{picture}%
\endgroup%

%% file: Figures/downward_closure.pdf_tex
%% Creator: Inkscape 1.3.2 (091e20e, 2023-11-25), www.inkscape.org
%% PDF/EPS/PS + LaTeX output extension by Johan Engelen, 2010
%% Accompanies image file 'downward_closure.pdf' (pdf, eps, ps)
%%
%% To include the image in your LaTeX document, write
%%   \input{<filename>.pdf_tex}
%%  instead of
%%   \includegraphics{<filename>.pdf}
%% To scale the image, write
%%   \def\svgwidth{<desired width>}
%%   \input{<filename>.pdf_tex}
%%  instead of
%%   \includegraphics[width=<desired width>]{<filename>.pdf}
%%
%% Images with a different path to the parent latex file can
%% be accessed with the `import' package (which may need to be
%% installed) using
%%   \usepackage{import}
%% in the preamble, and then including the image with
%%   \import{<path to file>}{<filename>.pdf_tex}
%% Alternatively, one can specify
%%   \graphicspath{{<path to file>/}}
%% 
%% For more information, please see info/svg-inkscape on CTAN:
%%   http://tug.ctan.org/tex-archive/info/svg-inkscape
%%
\begingroup%
  \makeatletter%
  \providecommand\color[2][]{%
    \errmessage{(Inkscape) Color is used for the text in Inkscape, but the package 'color.sty' is not loaded}%
    \renewcommand\color[2][]{}%
  }%
  \providecommand\transparent[1]{%
    \errmessage{(Inkscape) Transparency is used (non-zero) for the text in Inkscape, but the package 'transparent.sty' is not loaded}%
    \renewcommand\transparent[1]{}%
  }%
  \providecommand\rotatebox[2]{#2}%
  \newcommand*\fsize{\dimexpr\f@size pt\relax}%
  \newcommand*\lineheight[1]{\fontsize{\fsize}{#1\fsize}\selectfont}%
  \ifx\svgwidth\undefined%
    \setlength{\unitlength}{410.25bp}%
    \ifx\svgscale\undefined%
      \relax%
    \else%
      \setlength{\unitlength}{\unitlength * \real{\svgscale}}%
    \fi%
  \else%
    \setlength{\unitlength}{\svgwidth}%
  \fi%
  \global\let\svgwidth\undefined%
  \global\let\svgscale\undefined%
  \makeatother%
  \begin{picture}(1,0.31444241)%
    \lineheight{1}%
    \setlength\tabcolsep{0pt}%
    \put(0,0){\includegraphics[width=\unitlength,page=1]{downward_closure.pdf}}%
    \put(0.06059936,0.17772033){\color[rgb]{0,0,0}\makebox(0,0)[t]{\smash{\begin{tabular}[t]{c}$\color{myred}D^1$\end{tabular}}}}%
    \put(0.00794818,0.30645852){\color[rgb]{0,0,0}\makebox(0,0)[t]{\smash{\begin{tabular}[t]{c}$\Delta_X$\end{tabular}}}}%
    \put(0.04136122,0.09005679){\color[rgb]{0,0,0}\makebox(0,0)[t]{\smash{\begin{tabular}[t]{c}$\color{myred}D^1_\downarrow$\end{tabular}}}}%
    \put(0.21057508,0.04997638){\color[rgb]{0,0,0}\makebox(0,0)[t]{\smash{\begin{tabular}[t]{c}$\color{myblue}D^2$\end{tabular}}}}%
    \put(0.11499998,0.02944782){\color[rgb]{0,0,0}\makebox(0,0)[t]{\smash{\begin{tabular}[t]{c}$\color{myblue}D^2_\downarrow$\end{tabular}}}}%
    \put(0.31112666,0.0022995){\color[rgb]{0,0,0}\makebox(0,0)[t]{\smash{\begin{tabular}[t]{c}$\Delta_T$\end{tabular}}}}%
    \put(0.54455856,0.17772033){\color[rgb]{0,0,0}\makebox(0,0)[t]{\smash{\begin{tabular}[t]{c}$\max\{D^1_\downarrow \cap D^2_\downarrow\}$\end{tabular}}}}%
    \put(0.88209291,0.17772033){\color[rgb]{0,0,0}\makebox(0,0)[t]{\smash{\begin{tabular}[t]{c}$\max\{D^1 \cup D^2\}$\end{tabular}}}}%
    \put(0.35055325,0.30645852){\color[rgb]{0,0,0}\makebox(0,0)[t]{\smash{\begin{tabular}[t]{c}$\Delta_X$\end{tabular}}}}%
    \put(0.69361401,0.30645852){\color[rgb]{0,0,0}\makebox(0,0)[t]{\smash{\begin{tabular}[t]{c}$\Delta_X$\end{tabular}}}}%
    \put(0.65329872,0.0022995){\color[rgb]{0,0,0}\makebox(0,0)[t]{\smash{\begin{tabular}[t]{c}$\Delta_T$\end{tabular}}}}%
    \put(0.99634369,0.00365631){\color[rgb]{0,0,0}\makebox(0,0)[t]{\smash{\begin{tabular}[t]{c}$\Delta_T$\end{tabular}}}}%
  \end{picture}%
\endgroup%